\newcommand{\avg}{\mbox{avg}}
\newcommand{\agg}{\mbox{aggregate}}
\newcommand{\map}{\mbox{map}}
\spnewtheorem{observation}{Observation}{\bfseries}{\itshape}
\newcommand{\citeAY}[1]{\cite{#1}}
\newcommand{\citeauthor}[1]{\cite{#1}}
\newcommand{\citeyear}[1]{\cite{#1}}
\begin{document}

\mainmatter

\title{Solving Relational MDPs with Exogenous Events and Additive Rewards}

\author{Saket Joshi\inst{1} \and Roni Khardon\inst{2} \and Prasad Tadepalli\inst{3} \and Aswin Raghavan\inst{3} \and Alan Fern\inst{3}}

\institute{
Cycorp Inc., Austin, TX, USA  
\and 
Tufts University, Tufts University, Medford, MA, USA 
\and 
Oregon State University, Corvallis, OR, USA 
}

\maketitle
\begin{abstract}
We formalize a simple but natural subclass of {\em service domains} for relational planning problems with object-centered, independent
exogenous events and additive rewards capturing, for example, problems in inventory control. Focusing on this subclass, we present a new symbolic planning algorithm which is the first algorithm that has explicit performance guarantees for relational MDPs with exogenous events. 
In particular, 
under some technical conditions,
our planning algorithm provides a monotonic lower bound 
on the optimal value function. To support this algorithm we present novel evaluation and reduction
techniques for generalized first order decision diagrams, a knowledge representation for real-valued functions over relational world states.
Our planning algorithm uses a set of focus states, which serves as a training set, to simplify and approximate the symbolic solution, and can thus be seen to perform
learning for planning. A preliminary experimental evaluation demonstrates the validity of our approach.
\end{abstract}

\section{Introduction}

Relational Markov Decision Processes (RMDPs) offer an attractive formalism
to study both reinforcement learning and probabilistic planning in relational
domains.
However, most work on RMDPs has
focused on planning and learning when the only transitions in the
world are a result of the agent's actions.
We are interested in a class of problems modeled as
{\em service domains}, where
the world is affected by exogenous service requests
in addition to the agent's actions.
In this paper we use the
inventory control (IC) domain as a motivating running example and for experimental validation.
The domain models
a retail company faced with the task of maintaining the inventory
in its shops to meet consumer demand.
Exogenous events (service requests) correspond to arrival of customers at shops
and, at any point in time,
 any number of service requests can occur
independently of each other and independently of the agent's action.
Although we focus on IC,
independent exogenous service requests are common
in many other problems, for example, in
fire and emergency response, air traffic control,
and service centers such as
taxicab companies, hospitals, and restaurants.
Exogenous events present a challenge for planning
and reinforcement learning algorithms
because the number of possible next states,
the ``stochastic branching
factor'', grows exponentially
in the number of possible
simultaneous service requests.

In this paper we consider symbolic dynamic programming (SDP) to solve
RMDPs, as it allows to reason more
abstractly than what is typical in forward planning and reinforcement
learning.
The SDP solutions for propositional MDPs can be adapted to
RMDPs by grounding the RMDP for each size to get
a propositional encoding, and
then using a ``factored approach'' to solve the resulting
planning
problem, e.g., using algebraic decision diagrams (ADDs) \cite{HoeyStHuBo99} or
linear function approximation \cite{GuestrinKoPaVe03}.
This approach can easily model exogenous events
\cite{BoutilierDeHa99}
but it plans for a fixed domain size and requires increased time and space due to
the grounding.
The relational (first order logic) SDP approach
\cite{BoutilierRePr01} provides a solution which is independent of the domain
size, i.e., it holds for any problem instance.
On the other hand, exogenous events make the first order formulation much
more complex. To our knowledge, the only work to have approached this
is \cite{SannerBo07,Sanner08}.
While Sanner's work
is very
ambitious in that it attempted
to solve a very general class of problems,
the solution used linear function approximation,
approximate policy iteration, and some heuristic logical
simplification steps to
demonstrate that some problems can be solved
and it is not clear when the combination of ideas
in that work
is applicable,
both in terms of the algorithmic approximations and in
terms of the symbolic simplification algorithms.

In this paper we make a different compromise by constraining the class
of problems and aiming for a complete symbolic solution.
In particular, we introduce the class of service domains, that have
a simple form of independent object-focused exogenous events, so that
the transition in each step can be modeled as first taking the agent's action,
and then following a sequence of ``exogenous actions'' in any order.
We then investigate a relational SDP approach to solve such problems.
The main contribution of this paper is a new symbolic algorithm
that is proved to provide a lower bound approximation on the true value function for service domains under certain technical assumptions.
While the assumptions are somewhat strong, they allow us to provide the first complete analysis of relational SDP with exogenous events which is important for understanding such problems. In addition, while the assumptions are needed for the analysis, they are not needed for the algorithm that can be applied in more general settings.
Our second main contribution provides algorithmic support to implement
this algorithm
using the GFODD
representation of \cite{JoshiKeKh11}. GFODDs provide a scheme for capturing and manipulating functions over relational structures. Previous work has analyzed some theoretical properties of this representation but did not provide practical algorithms.
In this paper we develop
a model evaluation algorithm for GFODDs
inspired by variable elimination (VE),
and a model checking reduction for GFODDs.
These are crucial for efficient
realization of the new approximate SDP algorithm.
We illustrate the new algorithm
in two variants of the IC domain,
where one satisfies our assumptions and the other does not.
Our results demonstrate that
the new algorithm can be implemented efficiently, that its
size-independent solution scales much better than propositional
approaches \cite{HoeyStHuBo99,SannerUtDe10}, and that
it produces high quality policies. %

\section{Preliminaries: Relational Symbolic Dynamic Programming}

We assume familiarity with basic notions of Markov Decision Processes
(MDPs) and First Order Logic
\cite{RusselNo02,Puterman1994}.  Briefly, a MDP is given by a set of states $S$, actions $A$,
transition function $Pr(s'|s,a)$, immediate reward function $R(s)$ and
discount factor $\gamma<1$.
The solution of a MDP is a policy that maximizes
the expected discounted total reward obtained by following that policy starting from any state.
The Value Iteration algorithm (VI), calculates the optimal value function $V^*$ by  iteratively performing Bellman backups
$V_{i+1} = T[V_i]$ defined for each state $s$ as,
\begin{equation}
\label{eq:viflat}
V_{i+1}(s)\leftarrow \max_a\{ R(s) + \gamma \sum_{s'} Pr(s'|s,a) V_i(s')\}.
\end{equation}

\noindent{\bf Relational MDPs:}
Relational MDPs are simply MDPs where the states and actions are described
in a function-free first order logical language. In particular, the
language allows a set of logical constants, a set of logical
variables, a set of predicates (each with its associated arity),
but no functions of arity greater than 0.
A state corresponds to an  \emph{interpretation} in first order logic
(we focus on finite interpretations)
which specifies (1) a finite set of $n$ domain elements also known as objects, (2)
a mapping of constants to domain elements, and (3) the truth values of all the predicates over tuples of domain elements of appropriate size (to match the arity of the predicate).
Atoms are predicates applied to appropriate tuples of arguments. An atom is said to be ground when all its arguments are constants or domain elements.
For example, using this notation ${empty}({x_1})$ is an atom and
${empty}({shop23})$
is a ground atom involving the predicate
${empty}$
and object
${shop23}$
(expressing that the shop $shop23$ is empty in the IC domain).
Our notation does not distinguish constants
and variables as this will be clear from the context.
One of the advantages of relational SDP algorithms, including the one in this paper, is that the number of objects $n$ is not known or used at planning time and the resulting policies generalize across domain sizes.

The state transitions induced by agent actions are modeled exactly as in previous SDP work \cite{BoutilierRePr01}.
The agent has a set of
action types $\{A\}$ each parametrized with a tuple of objects to
yield an action template $A(x)$ and a concrete ground action $A(o)$
(e.g. template $unload(t,s)$ and concrete action $unload(truck1,shop2)$). To
simplify notation, we use $x$ to refer to a
single variable or a tuple of variables of the appropriate arity.
Each agent action has a finite number of action variants
$A_j(x)$
(e.g., action success vs. action failure),
and when the user performs $A(x)$ in state $s$ one of the variants
is chosen randomly using the state-dependent action choice distribution $Pr(A_j(x) | A(x))$.

Similar to previous work we model the reward as some additive function over the domain. To avoid some technical complications, we use average instead of sum in the reward function; this yields the same result up to a multiplicative factor.

\noindent{\bf Relational Expressions and GFODDs:}
To implement planning algorithms for relational MDPs we require a symbolic representation of functions
to compactly describe the rewards, transitions, and eventually value functions.
In this paper we use the GFODD representation of \citeAY{JoshiKeKh11} but
the same ideas work for any
representation that can express open-expressions and closed
expressions over interpretations (states).
An expression represents
a function mapping interpretations to real values.
An open expression  $f(x)$, similar to an open formula in first order logic,
can be evaluated in interpretation $I$ once we substitute the variables $x$
with concrete objects in $I$.
A closed expression $(\agg_x f(x))$, much like a closed first
order logic formula, aggregates the value of $f(x)$ over all possible
substitutions of $x$ to objects in $I$.
First order logic limits $f(x)$ to
have values
in $\{0,1\}$ (i.e., evaluate to {\em false} or {\em true}) and provides the aggregation $\max$
(corresponding to existential quantification)
and $\min$
(corresponding to universal quantification)
that can be used individually on each variable in $x$.
Expressions are more general allowing for additional aggregation functions (for example, average) so that aggregation generalizes quantification in logic, and
allowing $f(x)$ to take numerical values.
On the other hand, our expressions require aggregation operators
to be at the front of the formulas and thus correspond to logical expressions in prenex normal form.
This enables us to treat the aggregation portion and formula portion separately in our algorithms.
In this paper we
focus on average and max aggregation.
For example, in the IC domain we might use the expression:
``$\max_t, \avg_s,$ (if $\neg empty(s)$ then 1, else if $tin(t,s)$ then 0.1, else 0)''. Intuitively, this awards a 1 for any non-empty shop and at
most one shop is awarded a 0.1 if there is a truck at that shop. The value of this expression is given by picking one $t$ which maximizes the average over $s$.

GFODDs provide a graphical representation and associated algorithms to
represent open and closed expressions. A GFODD is given by an aggregation
function, exactly as in the expressions, and a labeled directed acyclic
graph that represents the open formula portion of the expression.  Each leaf
in the GFODD is labeled with a non-negative numerical value, and each
internal node is labeled with a first-order atom (allowing for equality atoms)
where we allow atoms to use constants or variables as arguments.
As in propositional diagrams \cite{BaharFrGaHaMaPaSo93}, for efficiency reasons, the order over nodes in the diagram must conform to a fixed ordering over node labels, which are first order atoms in our case.
Figure~\ref{Fig:IC_Dynamics}(a) shows an
example GFODD capturing the expression given in the previous
paragraph. %

Given a diagram
$B=(\agg_x f(x))$, an interpretation $I$, and a substitution of variables in
$x$ to objects in $I$, one can traverse a path to a leaf which gives the
value for that substitution. The values of all substitutions are aggregated
exactly as in expressions.
In particular,
let the variables as ordered in the aggregation function be $x_1,\ldots,x_n$.
To calculate
the final value, $\map_B(I)$,
the semantics prescribes that we enumerate all substitutions
of variables $\{x_i\}$ to objects in $I$ and then perform the aggregation
over the variables, going from $x_n$ to $x_1$.  We can therefore think of
the aggregation as if it organizes the substitutions into blocks (with fixed
value to the first $k-1$ variables and all values for the $k$'th variable), and then
aggregates the value of each block separately, repeating this from $x_n$ to
$x_1$.
We call the algorithm that follows this definition directly {\em brute
 force evaluation}.
 A detailed example is shown in Figure~\ref{Fig:VE}(a).
To evaluate the diagram in Figure~\ref{Fig:VE}(a) on the interpretation shown there we enumerate all $3^3=27$ substitutions of 3 objects to 3 variables, obtain a value for each, and then aggregate the values.
In the block where $x_1=a$, $x_2=b$, and $x_3$ varies over $a,b,c$ we get the values $3, 2, 2$ and an aggregated value of $7/3$.
This can be done for every block, and then we can aggregate over substitutions of $x_2$ and $x_1$. The final value in this case is $7/3$.

Any binary operation $op$ over real values can be generalized to
open and closed
expressions in a natural way. If $f_1$ and $f_2$ are
two closed expressions, $f_1\ op\ f_2$ represents the function which
maps each interpretation $w$ to $f_1(w)\ op\ f_2(w)$.
We follow the general convention of using
$\oplus$ and $\otimes$ to denote $+$ and $\times$ respectively
when they are applied to expressions.
This provides a definition but not an implementation of binary operations over expressions.
The work in
\cite{JoshiKeKh11} showed that
if the binary operation is
{\em safe}, i.e.,\ it distributes with respect to all aggregation operators,
then there is a simple algorithm (the Apply procedure)
implementing the binary operation over expressions.
For example
$\oplus$ is safe w.r.t.\ $\max$ aggregation, and it is easy to see that
$(\max_x f(x)) \oplus (\max_x g(x))$
= $\max_x \max_y f(x)+ g(y)$, and
the open formula portion (diagram portion) of the result
can be
calculated directly from the open expressions $f(x)$ and $g(y)$.
The Apply procedure \cite{WangJoKh08,JoshiKeKh11} calculates a diagram
representing $f(x)+ g(y)$ using operations over the graphs representing $f(x)$ and $g(y)$.
Note that we
need to standardize
apart, as in the renaming of $g(x)$ to
$g(y)$ for such operations.

\begin{figure}[t]
\begin{center}
\includegraphics[width=1.0\textwidth]{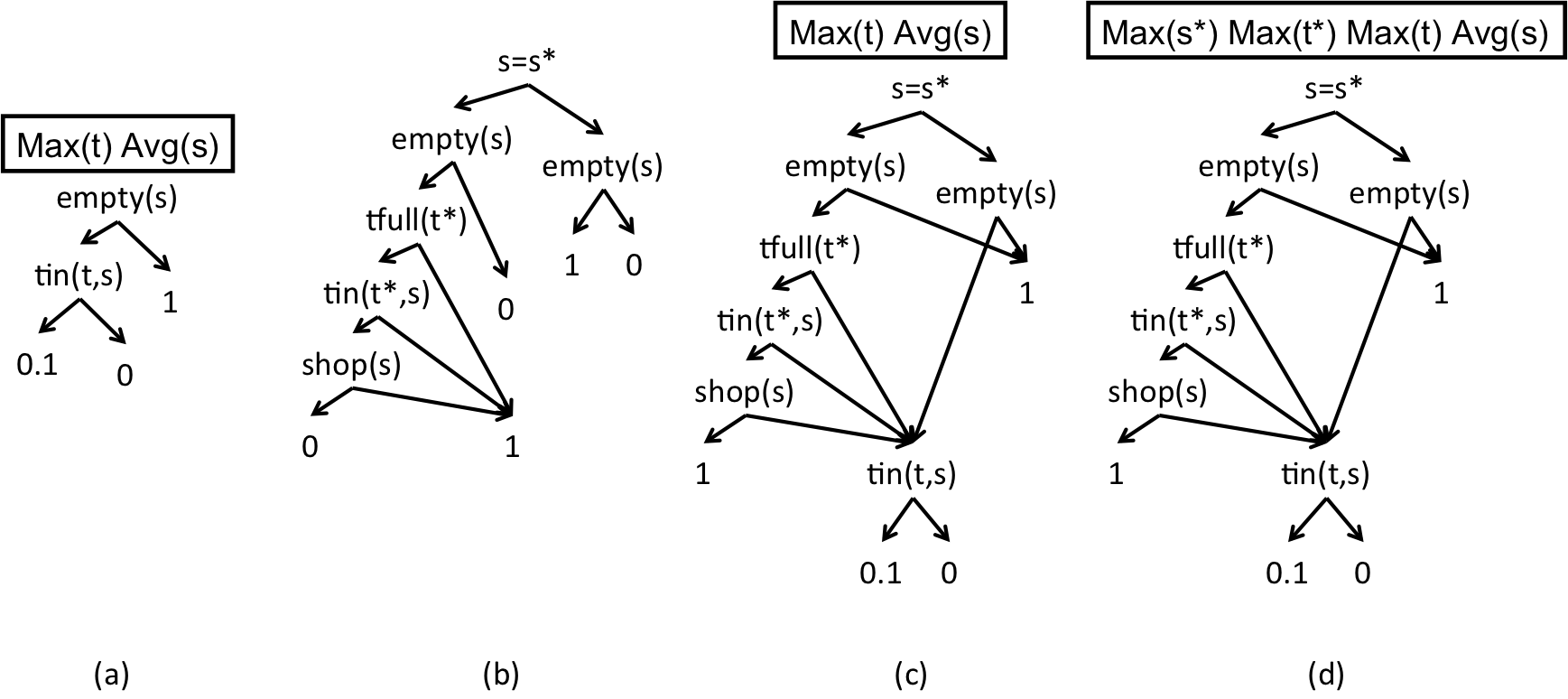}
\caption{IC Dynamics and Regression
(a) An example GFODD.
(b) TVD for $empty(s)$ under
the deterministic action $unload(t^*,s^*)$.
(c) Regressing the GFODD of (a)
over $unload(t^*,s^*)$.
(d) Object Maximization.
In these diagrams and throughout the paper, left-going edges represent the true
branch out of the node and right-going edges represent the false branch.
}
\label{Fig:IC_Dynamics}
\end{center}
\end{figure}

\noindent{\bf SDP for Relational MDPs:}
SDP  provides a symbolic
implementation
of the value iteration update  of Eq~(\ref{eq:viflat}) that avoids state enumeration
implicit in that equation.
The SDP algorithm of \cite{JoshiKeKh11} generalizing \cite{BoutilierRePr01} calculates one
iteration of value iteration as follows.  As input we get (as GFODDs) closed
expressions $V_n$, $R$ (we use Figure~\ref{Fig:IC_Dynamics}(a) as the reward in the example below), and open expressions for the probabilistic choice of
actions $Pr(A_j(x)|A(x))$ and for the dynamics of deterministic action
variants.

The action dynamics are specified by providing a diagram (called truth value
diagram or TVD) for each variant $A_j(x)$ and predicate template
$p(y)$. The corresponding TVD, $T(A_j(x),p(y))$, is an open expression that specifies the
truth value of $p(y)$ {\em in the next state} when $A_j(x)$ has been executed
{\em in the current state}.
Figure~\ref{Fig:IC_Dynamics}(b) shows the TVD of $unload(t^*,s^*)$ for predicates $empty(s)$.
Note that in contrast to other representations
of planning operators (but similar to the successor state axioms of \cite{BoutilierRePr01})  TVDs
specify the truth value after
the action and not the change in truth value.
Since unload is deterministic we have only one variant and
$Pr(A_j(x)|A(x))=1$. We illustrate probabilistic actions in the
next section.
Following \cite{WangJoKh08,JoshiKeKh11} we require
that $Pr(A_j(x)|A(x))$ and $T(A_j(x),p(y))$
have no aggregations and cannot introduce new variables,
that is, the first refers to $x$ only and the
second to $x$ and $y$ but no other variables. This implies that the
regression and product terms in the algorithm below do not change the
aggregation function and therefore enables the analysis of the algorithm.

The SDP algorithm of \cite{JoshiKeKh11} implements Eq~(\ref{eq:viflat}) using the following
4 steps. We denote this as $V_{i+1}=SDP^1(V_i)$.
\begin{enumerate}
\item \label{sdp_1} {\bf Regression:}  The $n$ step-to-go value function
  $V_n$ is regressed over every deterministic variant $A_j(x)$ of every
  action $A(x)$ to produce $Regr(V_n, A_j(x))$.
Regression is conceptually similar to goal regression in deterministic planning but it needs to be done for all (potentially exponential number of) paths in the diagram, each of which can be thought of as a goal in the planning context.
This can be done efficiently
 by replacing every atom in the open formula portion of $V_{n}$
(a node in the GFODD representation)
by its corresponding TVD without changing the aggregation function.

Figure~\ref{Fig:IC_Dynamics}(c) illustrates the process of block replacement for the diagram of
part (a).
Note that $tin()$ is not affected by the action. Therefore its
TVDs simply repeats the predicate value, and the corresponding node is
unchanged by block replacement.
Therefore, in this example, we are effectively replacing only one node with its TVD. The TVD leaf valued 1 is connected to the left child (true branch) of the node and the 0 leaf is connected to the right child (false branch).
To maintain the diagrams sorted we must in fact
use a different implementation than block replacement;
the implementation
does not affect the constructions or proofs in the paper and we therefore refer the reader to \cite{WangJoKh08} for the details.
\item \label{sdp_2} {\bf Add Action Variants:} The Q-function
  $Q_{V_n}^{A(x)}$ $=$ $R$ $\oplus$ $[\gamma$ $\otimes$
  $\oplus_j(Pr(A_j(x))$ $\otimes$ $Regr(V_n, A_j(x)))]$ for each
  action $A(x)$ is generated by combining regressed diagrams using the
  binary operations $\oplus$ and $\otimes$ over expressions.

Recall that probability diagrams do not refer to additional
variables. The multiplication can therefore be done directly on the open formulas
without changing the aggregation function.
As argued by \cite{WangJoKh08}, to guarantee correctness, both
summation steps
($\oplus_j$ and $R\oplus$ steps) must
standardize apart the functions before adding them.

\item \label{sdp_3} {\bf Object Maximization:} Maximize over the action
  parameters $Q_{V_n}^{A(x)}$ to produce $Q_{V_n}^A$ for each action $A(x)$,
  thus obtaining the value achievable by the best ground instantiation of
  $A(x)$ in each state. This step is implemented by converting action parameters $x$ in
  $Q_{V_n}^{A(x)}$ to variables, each associated with the $max$ aggregation
  operator, and appending these operators to the head of the aggregation
  function.

For example, if object maximization were applied to the diagram of Figure~\ref{Fig:IC_Dynamics}(c)
(we skipped some intermediate steps) then $t*, s*$
would be replaced with variables
and given max aggregation so that the aggregation is
as shown in part (d) of the figure.
Therefore, in step 2, $t*, s*$ are constants (temporarily added to the logical language) referring to concrete objects in the world, and in step 3 we turn them into variables and specify the aggregation function for them.

\item \label{sdp_4} {\bf Maximize over Actions:} The $n+1$ step-to-go value function $V_{n+1}$ $=$
$\max_A Q_{V_n}^A$, is generated by combining the diagrams using
the binary operation $\max$ over expressions.
\end{enumerate}

The main advantage of this approach is that the regression operation, and the
binary operations over expressions $\oplus$, $\otimes$, $\max$
can be
performed symbolically and therefore the final value function output by the
algorithm is a closed expression in the same language.
We therefore get a completely symbolic form of value iteration.
Several instantiations
of this idea have been implemented \cite{KerstingOtDe04,HolldoblerKaSk06,SannerBo09,WangJoKh08}.
Except for the work of \cite{JoshiKeKh11,SannerBo09} previous work has handled
only max aggregation.
Previous work
\cite{JoshiKeKh11} relies on the fact that the binary operations
$\oplus$, $\otimes$, and $\max$ are safe with respect to $\max,\min$
aggregation to provide a GFODD based SDP algorithm for problems
where the reward function has $\max$ and $\min$ aggregations .
In this paper we use reward functions with
$\max$ and $\avg$ aggregation.
The binary operations
$\oplus$ and $\otimes$ are safe with respect to $\avg$ but the binary operation $\max$ is not.
For example $2 + \avg\{1,2,3\} = \avg\{2+1,2+2,2+3\}$ but $\max\{2 , \avg\{1,2,3\}\} \not=\avg\{\max\{2,1\}, \max\{2,2\}, \max\{2,3\}\}$.
To address this issue we introduce a new implementation for this case in the next section.

\section{Model and Algorithms for Service Domains}

We now proceed to describe our extensions to SDP to handle
exogenous events. Exogenous events
refer to spontaneous changes to the state without
agent action.
Our main modeling assumption, denoted {\bf A1}, is that we have
{\em object-centered exogenous actions} that are
automatically taken in every time step.
In particular, for every object $i$
in the domain we have action $E(i)$ that acts on object $i$ and
the conditions and effects of $\{E(i)\}$ are such that they
are mutually non-interfering: given any state $s$, all the actions
$\{E(i)\}$ are applied simultaneously, and this is equivalent to their
sequential application in any order.
We use the same GFODD action representation described in the previous section to capture
the dynamics of $E(i)$.

\noindent
{\bf Example: IC Domain.}
We use a simple version of the inventory control domain (IC) as a
running example, and for some of the experimental results.
In IC the objects are a depot, a truck and a number of shops.
A shop can be empty or full, i.e.,
the inventory has only two levels and the truck can either be at the depot or at a shop.
The reward is the fraction (average) of non-empty shops.
Agent actions are deterministic and they capture stock replacement. In particular,
a shop can be filled by {\it unload}ing inventory from the
truck in one step. The truck can be {\it load}ed in a depot and
{\it drive}n from any location (shop or depot) to any location in one
step.
The exogenous action $E(i)$ has two variants;
the success variant $E_{succ}(i)$ (customer
arrives at shop $i$, and if non-empty the inventory
becomes empty)
occurs with probability 0.4 and the fail variant $E_{fail}(i)$ (no customer, no changes
to state) occurs with probability 0.6.
Figure~\ref{Fig:IC_DynRegr} parts (a)-(d) illustrate the model for IC and its GFODD representation.
In order to facilitate the presentation of algorithmic steps, Figure~\ref{Fig:IC_DynRegr}(e) shows a slightly different reward function (continuing previous examples) that is used as the reward in our running example.

For our analysis we make two further modeling assumptions. {\bf A2}: we assume that
exogenous action $E(i)$ can only affect
unary properties of the object $i$.
To simplify the presentation we consider a single such predicate $sp(i)$
that may be affected, but any number of such predicates can be handled.
In IC, the special predicate $sp(i)$ is $empty(i)$
specifying whether the shop is empty.  {\bf A3:} we assume that  $sp()$ does
not appear in the precondition of any agent action. It follows that $E(i)$ only affects $sp(i)$ and
that $sp(i)$ can appear in the precondition of $E(i)$ but cannot appear in the precondition of any other action.

\subsection{The Template Method}
\begin{figure}[t]
\begin{center}
\includegraphics[width=0.8\textwidth]{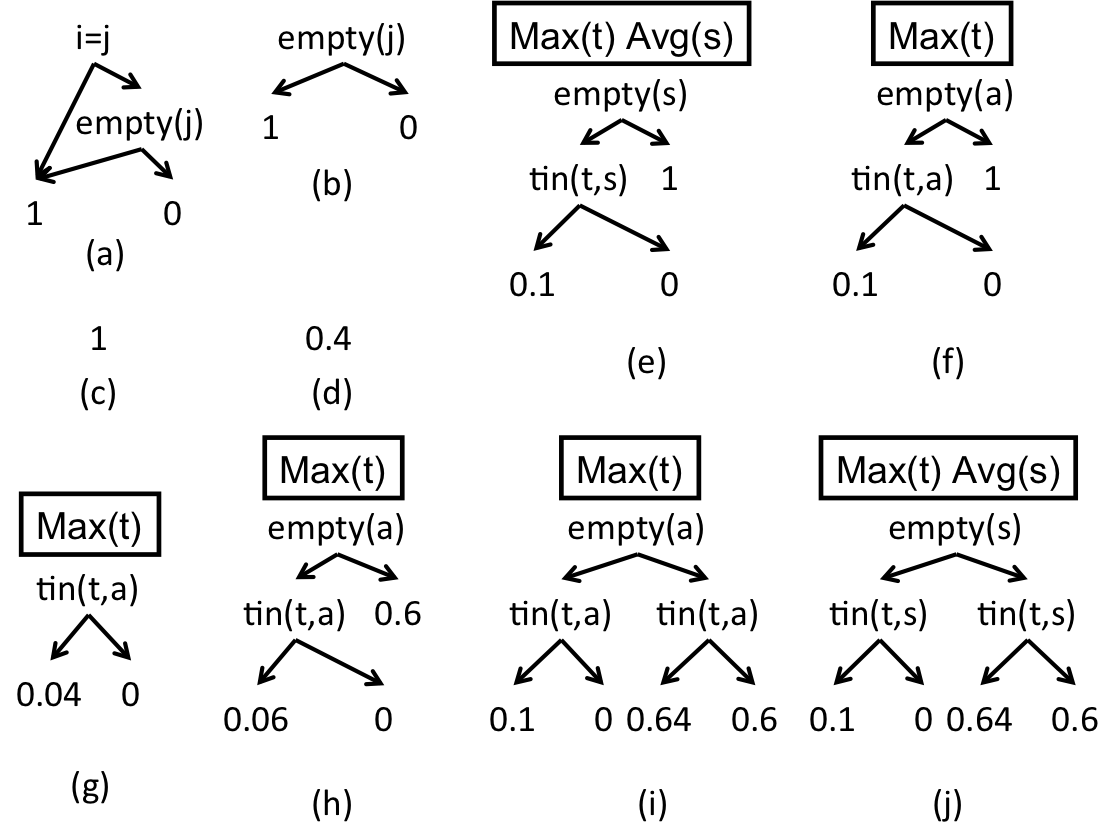}
\caption{Representation and template method for IC. (a) TVD for $empty(j)$ under action variant $E_{succ}(i)$. (b) TVD for $empty(j)$ under action variant $E_{fail}(i)$.
(c) A specialized form of (a) under $i=j$.
This is simply the value 1 and is therefore a GFODD given by a single leaf node.
(d) $Pr(E_{succ}(i)|E(i))$ which is simply the value 0.4.
(e) A simple reward function. (f) Grounding (e) using Skolem constant $a$. (g) Regressing (f) over $E_{succ}(a)$ and multiplying with the probability diagram in (d). (h) Regressing (f) over $E_{fail}(a)$ and multiplying by its probability diagram. (i) Adding (g) and (h) without standardizing apart. (j) Reintroducing the Avg aggregation.}
\label{Fig:IC_DynRegr}
\end{center}
\vspace*{-0.2in}
\end{figure}

Extending SDP to handle exogenous events is complicated because the events depend on
the objects in the domain and on their number and exact solutions can result in complex expressions
that require counting formulas over the domain
\cite{SannerBo07,Sanner08}.
A possible simple approach would explicitly calculate
the composition of the agent's actions with all the exogenous events. But this
assumes that we know the number of objects $n$ (and thus does not generalize) and
results in an exponential number of action variants, which makes it infeasible.
A second simple approach would be to directly modify the SDP algorithm so that
it sequentially regresses the value function over each of
the ground exogenous actions
before performing the regression over the agent
actions, which is correct by our assumptions. However, this approach, too, requires us to know $n$
and because it effectively grounds the solution it suffers in terms of
generality.

We next describe the {\em template method}, one of our main contributions, which
provides a completely abstract approximate SDP solution for the exogenous event model.
We make our final assumption, {\bf A4}, that the reward function (and inductively $V_i$)
is a closed expression of the
form $\max_x\avg_y V(x,y)$ where $x$ is a (potentially empty) set of variables and $y$
is a single variable, and
in $V(x,y)$ the predicate $sp()$
appears instantiated only as $sp(y)$.
The IC domain as described above satisfies all our assumptions.

The template method
first runs the following 4 steps, denoted $SDP^2(V_i)$,
and then follows with the 4 steps of SDP as given above for user actions.
The final output of our approximate Bellman backup, $T'$,
is $V_{i+1}= T'(V_i) = SDP^1(SDP^2(V_i))$.

\noindent
1. %
{\bf Grounding:}
Let $a$ be a Skolem constant
not in $V_i$.
Partially ground $V$ to get $V= \max_x V(x,a)$ \\
2. \label{sdp_1new} {\bf Regression:}  The function $V$ is
  regressed over every deterministic variant $E_j(a)$ of the exogenous
  action centered at $a$ to produce $Regr(V, E_j(a))$. \\
3. \label{sdp_2new} {\bf Add Action Variants:} The value function
$V=$ $\oplus_j$$(Pr(E_j(a))$ $\otimes$ $Regr(V, E_j(a)))$ is updated.
As in $SDP^1$, multiplication is done directly on the open formulas
without changing the aggregation function.
Importantly, in contrast with $SDP^1$,
here we do not
standardize apart the functions when performing $\oplus_j$. This leads to an
approximation. \\
4. %
{\bf Lifting:}
Let the output of the previous step be
$V= \max_x W(x,a)$.
Return $V=\max_x\avg_y W(x,y)$.

Thus, the algorithm grounds $V$ using a generic object for exogenous actions,
it then performs regression for a single generic exogenous action, and then
reintroduces the aggregation.  Figure~\ref{Fig:IC_DynRegr} parts (e)-(j) illustrate this
process.
We now show that our algorithm provides a monotonic lower bound on the value function.
The crucial step is the analysis of $SDP^2(V_i)$. We have:

\begin{lemma}
\label{prop:lowerbound}
Under assumptions {\bf A1, A2, A4}
the
value function calculated by $SDP^2(V_i)$ is a lower bound
on the value of
regression of $V_i$ through all exogenous actions.
\end{lemma}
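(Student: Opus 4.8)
The plan is to translate both the quantity we want to bound (the exact regression of $V_i$ through all exogenous actions) and the output of $SDP^2(V_i)$ into explicit probabilistic form, and then to show that they differ only in the order of a maximization and an expectation. The whole lemma then collapses to the elementary inequality $\max_x \mathbf{E}[\,\cdot\,] \le \mathbf{E}[\max_x \cdot\,]$, where $\mathbf{E}$ denotes expectation over the random choice of exogenous action variants.

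First I would fix a concrete domain of $n$ objects and write $V_i = \max_x \avg_y V(x,y) = \max_x \frac{1}{n}\sum_{k=1}^{n} V(x,k)$. The key structural step, which is where {\bf A2} and {\bf A4} are used, is a locality observation: since $sp()$ occurs in $V(x,y)$ only as $sp(y)$, the $k$-th summand $V(x,k)$ mentions the special predicate only through $sp(k)$, and by {\bf A2} the atom $sp(k)$ is affected by the single exogenous action $E(k)$. Combined with {\bf A1} (the $\{E(i)\}$ are non-interfering, so the joint exogenous transition factorizes over objects and is order-independent), regressing $V(x,k)$ through \emph{all} exogenous actions coincides with regressing it through $E(k)$ alone. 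Writing $R_j(x,k)$ for the deterministic regression of $V(x,k)$ through variant $E_j(k)$ and $p_{k,j}=Pr(E_j(k))$, and letting $\vec{j}=(j_1,\dots,j_n)$ be the independent joint variant choice with $P(\vec j)=\prod_k p_{k,j_k}$, I can then express the exact target as
\[ Regr(V_i, \mbox{all } E) = \mathbf{E}_{\vec{j}}\Big[ \max_x \frac{1}{n} \sum_k R_{j_k}(x,k) \Big], \]
since the expectation over next states is exactly the expectation over $\vec j$, and the $\max_x$ sits \emph{inside} because it comes from evaluating $V_i$ in each realized next state.

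Next I would track $SDP^2$ through the same notation. Grounding replaces $\avg_y$ by the Skolem constant, giving the open expression $V(x,a)$ under $\max_x$; regression produces the variants $R_j(x,a)$; the variant-addition step, crucially \emph{without} standardizing apart, forms $\sum_j p_{a,j}R_j(x,a)$ while retaining a single shared copy of $x$; and lifting reinstates $\avg_y$. A short calculation (linearity, using that $R_{j_k}(x,k)$ depends on $\vec j$ only through $j_k$) shows
\[ SDP^2(V_i) = \max_x \frac{1}{n}\sum_k \sum_j p_{k,j}R_j(x,k) = \max_x \mathbf{E}_{\vec{j}}\Big[ \frac{1}{n}\sum_k R_{j_k}(x,k) \Big]. \]
The point is that not standardizing apart over variants $j$, together with the single $x$ carried through grounding and lifting and then shared across all $y$ by the $\max_x\avg_y$ skeleton, forces one common substitution for $x$ across all variants and all objects — which is exactly the effect of pulling $\max_x$ \emph{outside} the expectation.

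The two displays are identical except for the order of $\max_x$ and $\mathbf{E}_{\vec j}$, so the claim reduces to $\max_x \mathbf{E}[F(x,\vec j)] \le \mathbf{E}[\max_x F(x,\vec j)]$ with $F(x,\vec j)=\frac{1}{n}\sum_k R_{j_k}(x,k)$; this holds because for any fixed $x_0$ we have $\mathbf{E}[F(x_0,\vec j)] \le \mathbf{E}[\max_x F(x,\vec j)]$, and taking the maximum over $x_0$ on the left preserves the bound. I expect the main obstacle to be the second paragraph — rigorously justifying the locality and factorization at the symbolic (GFODD) level so that the exact regression genuinely takes the stated $\mathbf{E}_{\vec j}[\max_x \cdots]$ form. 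In particular I would need to verify that the variables in $x$ cannot introduce additional $sp$ dependencies or couple the variant choices of distinct objects, and that the per-object expected regression assembled by step 3 matches $\sum_j p_{k,j}R_j(x,k)$ after lifting; once that bookkeeping is in place, the final inequality is immediate.
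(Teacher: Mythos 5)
Your proposal is correct, and it rests on the same two pillars as the paper's proof: the locality observation that under {\bf A1}, {\bf A2}, {\bf A4} the regression factorizes per object (each $E(\ell)$ leaves every summand $V(x,k)$ with $k\neq\ell$ untouched), and the identification of the ``no standardize-apart'' approximation with an interchange of maximization and expectation. Where you genuinely differ is in the packaging. The paper never forms the joint distribution over variant vectors $\vec{j}$: it introduces an intermediate \emph{sequential} procedure that regresses $E(1),\ldots,E(n)$ one object at a time, never standardizing the variants apart, proves by induction over $k$ that the intermediate result keeps the form $\max_x \frac{1}{n}[W(x,1)+\cdots+W(x,k)+V(x,k+1)+\cdots+V(x,n)]$ (hence that the final output coincides with the template method's), and obtains the lower bound by applying, at each step, the two-function inequality $[\max_{x_1}\avg_{y_1} f^1(x_1,y_1)]+[\max_{x_2}\avg_{y_2} f^2(x_2,y_2)] \geq \max_x \avg_y [f^1(x,y)+f^2(x,y)]$. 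You instead write both the exact regression and $SDP^2(V_i)$ in closed ground form, as $\mathbf{E}_{\vec{j}}[\max_x F(x,\vec{j})]$ and $\max_x \mathbf{E}_{\vec{j}}[F(x,\vec{j})]$ respectively, and apply the swap once, globally. Your route is more direct and makes the semantic content of the approximation (a single max/expectation interchange, i.e.\ a straight-line-plan-like relaxation of the variant choices) completely transparent; indeed it essentially anticipates the paper's Observation~\ref{obs:slplans}. What the paper's sequential formulation buys is proximity to the symbolic algorithm: its structural invariant is exactly what certifies that the single generic regression over the Skolem constant $a$ lifts uniformly to all objects (the bookkeeping you flag as your main obstacle, which the paper discharges as its claim C2 using the uniformity of the TVDs and probability diagrams across objects), and the same invariant is reused outside the lemma to show that $SDP^1$ preserves {\bf A4}, so that the argument can be applied inductively over iterations. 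So there is no gap in your plan, but a complete write-up would still have to prove the two obligations you list, and they are precisely the content of the paper's induction.
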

Due to space constraints the complete proof is omitted and we only provide a sketch.
This proof and other omitted details can be found in the full version of this paper \cite{JoshiKhTaRaFe-arxiv-2013}.
\begin{proof} (sketch) The main idea in the proof is to show that, under our assumptions, the result of our algorithm
is equivalent to
sequential regression of all exogenous
actions, where in each step the action variants are not standardized
apart.

Recall that the input value function $V_i$ has the form $V= \max_x \avg_y V(x,y)$ $= \max_x \frac{1}{n}[V(x,1) + V(x,2) + \ldots + V(x,n)]$.
To establish this relationship we show that after the sequential algorithm regresses $E(1),\ldots,E(k)$ the intermediate value function has the form
$\max_x \frac{1}{n}[W(x,1) + W(x,2) + \ldots + W(x,k) + V(x,k+1) +\ldots + V(x,n)]$. That is, the first $k$ portions change in the same structural manner into a diagram $W$ and the remaining portions retain their original form $V$. In addition, $W(x,\ell)$ is the result of regressing $V(x,\ell)$ through $E(\ell)$ which is the same form as calculated by step 3 of the template method.
Therefore, when all $E(\ell)$ have been regressed, the result is $V= \max_x \avg_y W(x,y)$ which is the same as the result of the template method.

The sequential algorithm is  correct by definition when standardizing apart but yields a lower bound when not
standardizing apart.
This is true because for any functions $f^1$ and $f^2$ we have
$[\max_{x_1} \avg_{y_1} f^1(x_1,y_1)] + [\max_{x_2} \avg_{y_2} f^2(x_2,y_2)]
\geq
\max_{x} [\avg_{y_1}$ $f^1(x,y_1) + \avg_{y_2} f^2(x,y_2)]
=
\max_{x} \avg_{y} [ (f^1(x,y) + f^2(x,y))]
$ where the last equality holds because $y_1$ and $y_2$ range over the same
set of objects. Therefore, if $f^1$ and $f^2$ are the results of regression for different variants from step 2, adding them without standardizing apart as in the last equation yields a lower bound.
\qed
\end{proof}

The lemma requires that $V_i$ used as input satisfies {\bf A4}.
If this
holds for the reward function, and if $SDP^1$ maintains this property then
 {\bf A4} holds inductively for all $V_i$. Put together this implies that the template method
provides a lower bound on the true Bellman backup.
It therefore remains to show how $SDP^1$ can be implemented for $\max_x \avg_y$ aggregation
and that it maintains the form  {\bf A4}.

First consider regression. If assumption
{\bf A3} holds, then our algorithm using regression through TVDs
does not introduce new occurrences of $sp()$ into $V$.
Regression also does not change the aggregation function.
Similarly, the probability diagrams do not introduce $sp()$ and do not change the aggregation function.
Therefore {\bf A4} is maintained by these steps. For the other steps we need to discuss
the binary operations
$\oplus$ and $\max$.

For $\oplus$, using the same argument as above, we see that
$[\max_{x_1} \avg_{y_1} f^1(x_1,y_1)] + [\max_{x_2} \avg_{y_2} f^2(x_2,y_2)]
=
\max_{x_1} \max_{x_2} [\avg_{y}$ $f^1(x_1,y) + f^2(x_2,y)]$ and therefore it suffices to standardize apart the $x$ portion but $y$ can be left intact and {\bf A4} is maintained.

Finally, recall that we need a new implementation for the binary operation $\max$ with $\avg$ aggregation.
This can be done as follows:
to perform $\max\{[\max_{x_1}$ $\avg_{y_1}$ $f^1(x_1,y_1)],$
$[\max_{x_2} \avg_{y_2} f^2(x_2,y_2)]\}$ we can introduce two new variables
$z_1,z_2$ and write the expression: ``$\max_{z_1,z_2} \max_{x_1} \max_{x_2}
\avg_{y_1} \avg_{y_2}$ (if $z_1=z_2$ then $f^1(x_1,y_1)$ else
$f^2(x_2,y_2)$)''. This is clearly correct whenever the interpretation has at least two objects because $z_1,z_2$ are
unconstrained.
Now, because the branches of the if statement are mutually exclusive, this expression can be further simplified to
``$\max_{z_1,z_2} \max_{x} \avg_{y} $ (if $z_1=z_2$ then $f^1(x,y)$ else
$f^2(x,y)$)''. The implementation uses an equality node at the root with label $z_1=z_2$, and hangs $f^1$ and $f^2$ at the true and false branches.
Crucially it does not need to standardize apart the
representation of $f^1$ and $f^2$ and thus {\bf A4} is maintained.
This establishes that the approximation returned by
our algorithm, $T'[V_i]$,
is a lower bound of the true Bellman backup $T[V_i]$.

An additional argument (details available in \cite{JoshiKhTaRaFe-arxiv-2013})
shows
that this is a monotonic lower bound, that is, for all $i$ we have
$T[V_i]\geq V_i$ where
$T[V]$ is the true Bellman backup.
It is well known
(e.g., \cite{McMahanLG05})
that if this holds then the value of the greedy policy w.r.t.\ $V_i$ is at least $V_i$
(this follows from the monotonicity of the policy update operator $T_{\pi}$).
The significance is, therefore, that $V_i$
provides an immediate certificate on the quality of the resulting greedy policy.
Recall that
$T'[V]$ is our approximate backup,
$V_0=R$ and $V_{i+1}=T'[V_i]$. We have:

\begin{theorem}
When assumptions {\bf A1, A2, A3, A4} hold and the reward function is non-negative we have for all $i$:
$V_i \leq V_{i+1} = T'[V_i] \leq T[V_i] \leq V^*$.
\label{MLBThm}
\end{theorem}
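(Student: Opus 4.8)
The plan is to establish the chain of four inequalities by induction on $i$, treating each inequality as a separate claim that the preceding text has essentially already justified. The statement to prove is $V_i \leq V_{i+1} = T'[V_i] \leq T[V_i] \leq V^*$, so I would organize the proof around the three nontrivial relations: (a) the equality $V_{i+1} = T'[V_i]$, which is immediate by definition since the theorem sets $V_{i+1} = T'[V_i]$; (b) the single-step lower-bound $T'[V_i] \leq T[V_i]$; and (c) the monotonicity $V_i \leq V_{i+1}$; the final $T[V_i] \leq V^*$ then follows from combining these with standard value-iteration facts.

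First I would handle the lower-bound inequality $T'[V_i] \leq T[V_i]$. This is exactly what the surrounding discussion assembles: by Lemma~\ref{prop:lowerbound}, $SDP^2(V_i)$ produces a lower bound on the regression of $V_i$ through all exogenous actions, provided $V_i$ satisfies {\bf A4}. I would first argue that {\bf A4} is preserved inductively. The base case $V_0 = R$ satisfies {\bf A4} by the modeling assumptions. For the inductive step I would invoke the paragraph-by-paragraph verification already given: regression through TVDs does not introduce new occurrences of $sp()$ (using {\bf A3}) and does not alter the aggregation; the $\oplus$ operation can standardize apart only the $x$ variables while leaving $y$ intact; and the new $\max$ implementation using the equality test $z_1 = z_2$ avoids standardizing apart $f^1$ and $f^2$. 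Each of these preserves the $\max_x \avg_y$ form with $sp()$ occurring only as $sp(y)$, so {\bf A4} holds for $V_{i+1}$. Granting {\bf A4}, Lemma~\ref{prop:lowerbound} gives the lower bound for the $SDP^2$ phase, and since the subsequent $SDP^1$ steps implement the exact Bellman machinery (regression, adding variants, object maximization, maximizing over actions) the composition $T'[V_i] = SDP^1(SDP^2(V_i))$ is a lower bound on $T[V_i] = SDP^1(\text{exact exogenous regression})$. Here I must check that $SDP^1$ is \emph{monotone} as an operator, so that applying it to a smaller input yields a smaller output; this monotonicity of the exact backup operator is standard but worth stating explicitly.

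Next I would establish monotonicity $V_i \leq V_{i+1}$, again by induction on $i$. The base case requires $V_0 \leq V_1 = T'[V_0]$, i.e.\ $R \leq T'[R]$. Since the reward is non-negative and $T'[R] = R \oplus (\text{non-negative discounted future terms})$ that is itself bounded below by $R$, the base case holds. For the inductive step, I would use that $T'$ is itself a monotone operator: if $V_{i-1} \leq V_i$ then $T'[V_{i-1}] \leq T'[V_i]$, i.e.\ $V_i \leq V_{i+1}$. The monotonicity of $T'$ follows because every constituent operation — regression, $\oplus$ with non-negative probabilities, the averaging and maximizing aggregations — is order-preserving on value functions. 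This is the cleanest route, though one could alternatively appeal directly to $T'[V_i] \leq T[V_i]$ together with $V_i \leq T[V_i]$; the latter is precisely the monotonic-lower-bound property the text attributes to an additional argument in the full version. I expect the main obstacle to be making this monotonicity argument airtight, since the non-standard $\max$-with-$\avg$ implementation and the deliberate failure to standardize apart in $SDP^2$ make it non-obvious that $T'$ preserves the pointwise order; I would need to verify that the introduced approximation (dropping standardize-apart) is itself monotone in the input, which is plausible because it only ever \emph{lowers} values uniformly.

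Finally, the inequality $T[V_i] \leq V^*$ would follow from the two preceding facts by a standard argument. From $V_i \leq V_{i+1} \leq T[V_i]$ we obtain $V_i \leq T[V_i]$, the monotonic-lower-bound condition. Invoking the well-known result cited as \cite{McMahanLG05} — that $V \leq T[V]$ implies the true value iteration sequence started at $V$ converges monotonically upward to $V^*$, so in particular $T[V] \leq V^*$ — closes the chain. I would state this as a one-line appeal rather than reprove convergence of value iteration. Assembling the three pieces yields the full inequality $V_i \leq V_{i+1} = T'[V_i] \leq T[V_i] \leq V^*$ for all $i$, completing the induction.
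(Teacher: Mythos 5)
Your proposal is correct and follows essentially the same route as the paper's own proof: the lower bound $T'[V_i]\leq T[V_i]$ comes from Lemma~\ref{prop:lowerbound} plus inductive preservation of {\bf A4}, and the monotonicity $V_i\leq V_{i+1}$ is proved by induction with the same base case ($R\leq T'[R]$ via non-negativity) and the same inductive step (monotonicity of $T'$ as a composition of monotone GFODD operations: regress, $\oplus$, $\otimes$, $\max$). The only cosmetic difference is that you route $T[V_i]\leq V^*$ through the cited fact that $V\leq T[V]$ implies $T[V]\leq V^*$, while the paper leaves this standard step implicit.
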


As mentioned above, although the assumptions are required for our analysis, the algorithm can be applied more widely.
Assumptions {\bf A1} and  {\bf A4} provide our basic modeling assumption per object centered exogenous events and additive rewards. It is easy to generalize the algorithm to have events and rewards based on object tuples instead of single objects.
Similarly, while the proof fails when  {\bf A2} (exogenous events only affect special unary predicates) is violated
the algorithm can be applied directly without modification. When {\bf A3} does not hold, $sp()$ can appear with multiple arguments and the algorithm needs to be modified.
Our implementation introduces an additional approximation and at iteration
boundary we unify all the arguments of $sp()$ with the average
variable $y$. In this way the algorithm can be applied inductively for all $i$.
These extensions of the algorithm are demonstrated in our experiments.

\smallskip
\noindent{\bf Relation to Straight Line Plans:}
The template method provides
symbolic way to calculate a lower bound on the value function.
It is interesting to consider what kind of lower bound this provides.
Recall that the
{\em straight line plan approximation} (see e.g., discussion in \cite{BoutilierDeHa99})
does not calculate a policy and instead at any state it seeks the best linear plan with
highest expected reward.
As the next observation argues
(proof available in \cite{JoshiKhTaRaFe-arxiv-2013}) the template method
provides a related approximation.
We note, however, that unlike previous work on straight line plans
our computation is done symbolically and
calculates the approximation for all start states simultaneously.

\begin{observation}
\label{obs:slplans}
  The template method provides an approximation that is related to the value
  of the best straight line plan.
  When there is only one
  deterministic agent action template we
  get exactly the value of the straight line plan.
Otherwise,
the approximation is bounded between the
value of the straight line plan and the optimal value.
\end{observation}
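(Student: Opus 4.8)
The plan is to compare, at every horizon $i$, three quantities: the value $V_i$ produced by the template method, the value $V^{sl}_i$ of the best $i$-step straight line plan, and the optimal $i$-step value $V^*_i$, and to establish $V^{sl}_i \le V_i \le V^*_i$ in general, with the left inequality tight when there is a single agent action template. The upper bound $V_i \le V^*_i \le V^*$ is immediate from Theorem~\ref{MLBThm}, so the real work lies in the lower bound and in the single-action equality. Throughout I would exploit that, under {\bf A2} and {\bf A3}, the non-$sp$ portion of the state evolves deterministically under a fixed agent action sequence, so that evaluating a committed plan is an exogenous-only expectation and the additive reward decomposes objectwise through the surviving $\avg_y$ aggregation.

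For the lower bound $V_i \ge V^{sl}_i$, the key observation is that the template method departs from an exact backup only through the refusal to standardize apart the exogenous variants in step~3 of $SDP^2$, i.e.\ the coupling quantified in Lemma~\ref{prop:lowerbound}; every other step (regression, the probability products, $\oplus$, and the equality-test implementation of $\max$) is exact, as established in the text. Standardizing apart can change a value only when there are free maximization variables whose optimal setting genuinely differs across the stochastic branches. A straight line plan fixes every agent action in advance, so specializing the template computation to a single committed sequence of (type, parameter) choices leaves no free agent-choice variable to standardize apart; the coupled computation then evaluates that fixed plan exactly, returning its true expected discounted reward (the surviving $\avg_y$ is the reward aggregation, not a decision). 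Since $SDP^1$ maximizes over action types and parameters at each stage, $V_i$ dominates the value of every committed plan, whence $V_i \ge V^{sl}_i$. I would formalize this by induction on $i$, the inductive step using precisely that the only inequality incurred in a backup is the exogenous coupling, which is vacuous on a fixed plan.

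For the single-action equality I additionally need $V_i \le V^{sl}_i$. With one deterministic action template, step~\ref{sdp_4} of $SDP^1$ is the identity, so the only maximization variables appended to the aggregation are the action parameters created by object maximization, together with whatever maximizations already appear in the definition of $R$ under {\bf A4}. The latter are internal to evaluating the reward and are handled exactly by the safe $\oplus$ operation, so they contribute nothing beyond a committed plan's bookkeeping; the former are exactly the choices a straight line plan commits to. Because the coupling in $SDP^2$ forces each parameter choice to be identical across the stochastic exogenous outcomes, the template's aggregation ranges over exactly the committed parameter sequences and scores each as its true expected reward. Maximizing over this set returns precisely $V^{sl}_i$, giving equality; letting $i \to \infty$ transfers the finite-horizon statements to the discounted values.

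The main obstacle is the single-action direction together with pinning down why the argument stops short of equality once there is more than one action template. Concretely, one must verify that with a single action no step of $SDP^1$ secretly introduces a maximization variable that can be set differently across the stochastic branches and thereby purchase adaptivity beyond a committed plan: this requires checking that object maximization appends only parameter variables that the subsequent exogenous coupling then ties together across variants, and that the standardize-apart performed by $\oplus$ only ever separates reward-internal or across-time parameter choices that a straight line plan is also free to make. With more than one template the delicate point is step~\ref{sdp_4}: the exact $\max$ over action types, realized by the equality-test construction without standardizing apart, accumulates auxiliary variables across backups and lets the computation select among action types with a flexibility a single committed open-loop sequence need not possess, which is what can open a gap above $V^{sl}_i$ while Theorem~\ref{MLBThm} still caps $V_i$ at $V^*$. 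Making this last flexibility precise, and confirming it never exceeds $V^*$, is the step I expect to demand the most care.
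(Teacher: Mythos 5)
Your core argument matches the paper's central insight---the only inexact step is the refusal to standardize apart exogenous variants in step~3 of $SDP^2$ (Lemma~\ref{prop:lowerbound}), which renders future choices open-loop with respect to exogenous outcomes---but you organize it differently. The paper's proof is a direct operator-interchange characterization: coupling pushes each expectation over exogenous variants inside the maxes over action arguments, so iterating the backup converts the true interleaved form $\max_{\alpha_1} Exp_{\beta_1} \cdots \max_{\alpha_m} Exp_{\beta_m} f$ into the all-maxes-first form $\max_{\alpha_1}\cdots\max_{\alpha_m} Exp_{\beta_1}\cdots Exp_{\beta_m} f$, which is by definition the straight-line-plan value; it then lists the respects in which this analogy is inexact. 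Your sandwich $V^{sl}_i \le V_i \le V^*$, with the lower bound obtained by restricting the template method's maximization to assignments that encode a committed plan (on which the coupling is vacuous), is a legitimate and arguably cleaner route to the boundedness claim, and the upper bound via Theorem~\ref{MLBThm} is the same in both treatments.

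However, your diagnosis of the ``otherwise'' case has a genuine error of substance, and it is exactly where the paper says something different. The paper gives two reasons the analogy is inexact: (i) the explicit max over action templates in Step~\ref{sdp_4} of $SDP^1$, and (ii)---the one you omit entirely---that $\oplus_j$ over \emph{agent} action variants in Step~\ref{sdp_2} of $SDP^1$ \emph{is} standardized apart, so expectations over the agent's own stochastic outcomes remain correctly interleaved and future choices may adapt to them. Point (ii) is the load-bearing one: the observation requires the single template to be \emph{deterministic} precisely because one stochastic template already breaks the equality, a case your proposal (which treats ``otherwise'' as synonymous with ``more than one template'') never addresses. Moreover, the source you do name is doubtful: the auxiliary variables $z_1,z_2$ of the equality-node max become ordinary max-aggregation variables of $V_{i+1}$ (that is exactly how {\bf A4} is preserved), so at the next iteration's $SDP^2$ they are coupled across exogenous variants just like parameter variables; they buy no adaptivity to exogenous outcomes, and hence cannot by themselves ``open a gap above $V^{sl}_i$'' when agent actions are deterministic. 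Finally, your dismissal of reward-internal max variables as ``handled exactly by the safe $\oplus$'' is unjustified---those variables are standardized apart when the reward is added but are coupled with exogenous expectations one iteration later, so a committed plan's score need not equal its true expected reward---although this is an imprecision the paper's own sketch shares.
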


\section{Evaluation and Reduction of GFODDs}

The symbolic operations in the SDP algorithm yield 
diagrams that are redundant in the sense that portions
of them can be removed without changing the values they compute.
Recently, \cite{JoshiKeKh11,JoshiKeKh10} 
introduced the idea of model checking
reductions to compress such diagrams.  
The basic idea is simple.  Given a
set of ``focus states'' $S$,  we evaluate the diagram on every
interpretation in $S$. Any portion of the diagram that does not ``contribute" to
the final value in any of the interpretations is removed. 
The result is a diagram which is exact on the focus states, but 
may be approximate on other states. We refer the reader to  \cite{JoshiKeKh11,JoshiKeKh10} 
for further motivation and justification. In that work, 
several variants of this idea have been analyzed formally (for $\max$ and $\min$ aggregation), have been shown
to perform well empirically (for $\max$ aggregation), and methods for generating $S$ via random walks have been developed. 
In this section we develop the second contribution of the paper,
providing an efficient realization of this idea
for 
$\max_x\avg_y$ aggregation.

The basic reduction algorithm, which we refer to below as brute
force model checking for GFODDs, is:
(1)  
Evaluate the diagram on each example
in our focus set $S$
marking all edges that actively participate in
generating the final value returned for that example.  Because we have
$\max_x \avg_y$ this value is given by the ``winner'' of max
aggregation. This is a block of substitutions that includes one assignment to
$x$ and all possible assignments to $y$.  For each such block collect
the set of edges traversed by any of the substitutions in the block. When 
picking the max block, 
also collect the edges traversed by that
block, breaking ties  by lexicographic ordering over edge sets.
(2) 
Take the union of marked edges over all examples, connecting
any edge not in this set to 0. 

\begin{figure*}[t]
\begin{center}
\includegraphics[width=0.95\textwidth]{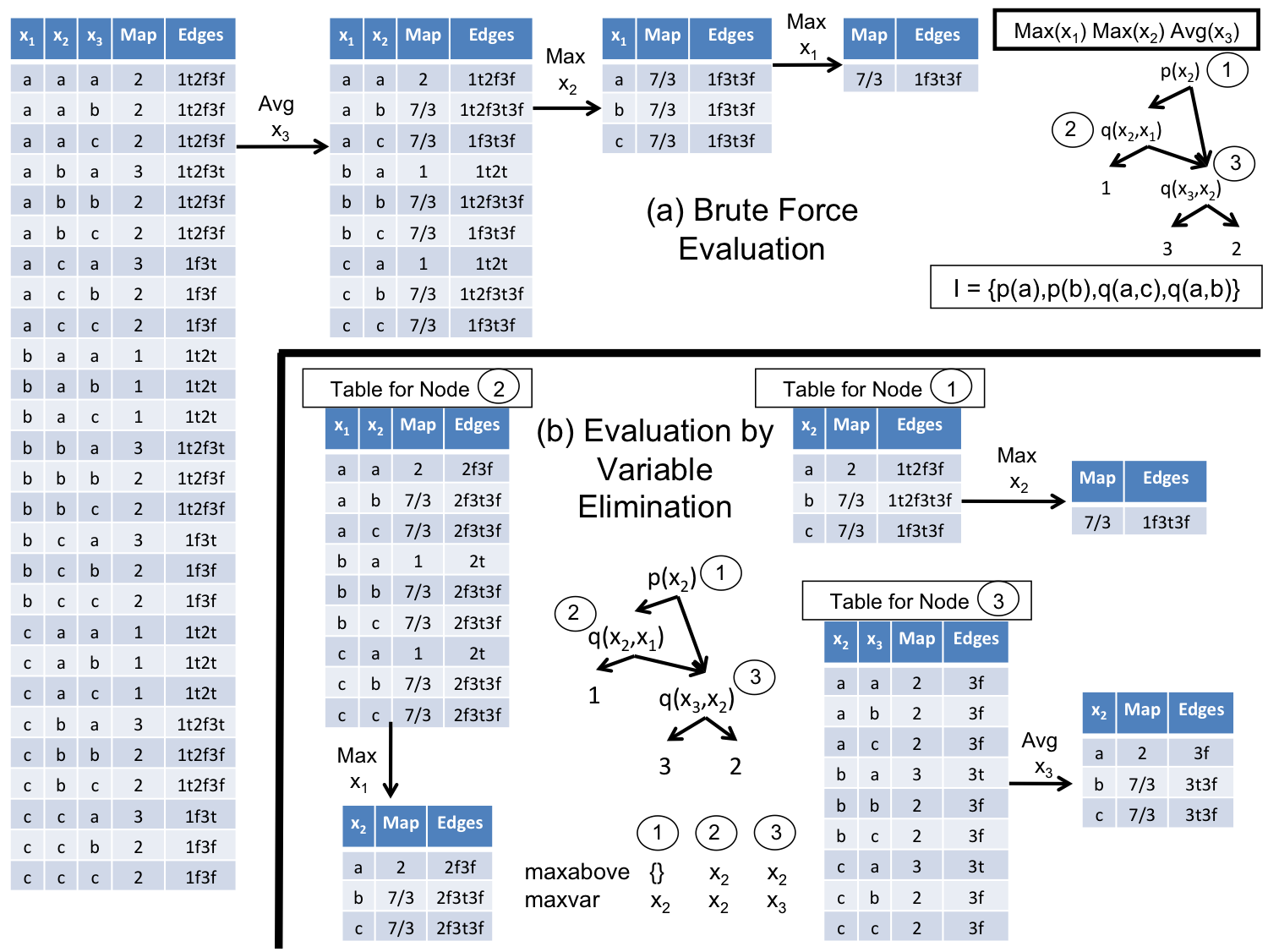}
\caption{GFODD Evaluation (a) Brute Force method. 
(b) Variable Elimination Method.}
\label{Fig:VE}
\end{center}
\vspace{-0.2in}
\end{figure*}

Consider again the example of evaluation in Figure~\ref{Fig:VE}(a), where 
we assigned node identifiers 1,2,3. 
We identify edges by their parent node and its
branch so that the left-going edge from the root is edge $1t$. In this case
the final value $7/3$ is achieved by multiple blocks of
substitutions, and two distinct sets of edges $1t2f3t3f$ and $1f3t3f$. 
Assuming $1$$<$$2$$<$$3$ and $f$$<$$t$, 
$1f3t3f$ is lexicographically smaller and is chosen as the marked set.
This process is illustrated in the tables of Figure~\ref{Fig:VE}(a).
Referring to the reduction procedure, if our focus set $S$ includes only this interpretation, then the edges 
$1t, 2t, 2f$ will be redirected to the value 0.

\noindent{\bf Efficient Model Evaluation and Reduction:}
We now show that the same process of evaluation and reduction can be
implemented more efficiently. 
The idea, taking inspiration from
variable elimination, is that we can aggregate some values early while
calculating the tables.
However, our problem is more complex than standard variable
elimination and we require a recursive computation over the diagram. 

For every node $n$ let $n.lit = p(x)$ be the literal at the node 
and let $n_{\downarrow f}$ and $n_{\downarrow t}$ be its false and true branches respectively.
Define $above(n)$ to be the set of variables
appearing above $n$ and $self(n)$ to be the variables
in $x$. Let $maxabove(n)$ and $maxself(n)$ be the variables of largest index in
$above(n)$ and $self(n)$ respectively.
Finally let $maxvar(n)$ be the maximum between $maxabove(n)$ and
$maxself(n)$. Figure~\ref{Fig:VE}(b) shows $maxvar(n)$ and $maxabove(n)$ for 
our example diagram. Given interpretation $I$, let $bl^{n_{\downarrow t}}(I)$
be the set of bindings $a$ of objects from 
$I$ to
variables in $x$ such that $p(a) \in I$.  Similarly $bl^{n_{\downarrow
    f}}(I)$ is the set of bindings $a$ such that $\neg p(a) \in I$.  
The two
sets are obviously disjoint and together cover all bindings for $x$. 
For example, 
for the root node in the diagram of Figure~\ref{Fig:VE}(b), $bl^{n_{\downarrow
    t}}(I)$ is a table mapping $x_2$ to $a,b$ and $bl^{n_{\downarrow f}}(I)$
is a table mapping $x_2$ to $c$. 
The evaluation procedure, Eval($n$),
is as follows:

\begin{enumerate}
\item If $n$ is a leaf:
\\
(1) Build a ``table'' with all variables implicit, and with the value of $n$.
\\
(2) Aggregate over all variables from the last variable down to $maxabove(n)+1$.
\\
(3) Return the resulting table. \\
\item Otherwise $n$ is an internal node:
\\
(1)
Let $M^{\downarrow t}(I)$ $=$ $bl^{n_{\downarrow t}}(I)$ $\times$
  Eval($n_{\downarrow t}$), where  
$\times$ is the join of the tables.
\\
(2) Aggregate over all the variables in $M^{\downarrow t}(I)$ from the last
variable not yet aggregated down to $maxvar(n)+1$. 
\\
(3) Let $M^{\downarrow f}(I)$ $=$ $bl^{n_{\downarrow f}}(I)$ $\times$
Eval($n_{\downarrow f}$) 
\\
(4) Aggregate over all the variables in $M^{\downarrow f}(I)$ from the last
variable not yet aggregated down to $maxvar(n)+1$. 
\\
(5)
Let  $M=M^{\downarrow t}(I) \cup M^{\downarrow f}(I)$.
\\
(6) Aggregate over all the variables in $M$ from the last
variable not yet aggregated down to $maxabove(n)+1$. 
\\
(7)
Return node table $M$.
\end{enumerate}

We note several improvements for this algorithm and its application for reductions, all of which are applicable and used in our experiments.
(I1) We implement the above recursive code using 
dynamic programming to avoid redundant calls. 
(I2) When an aggregation operator is idempotent, i.e.,\ 
$op\{a,\ldots,a\}=a$, aggregation over implicit variables
does not change the table, and the implementation is simplified.
This holds for $\max$ and $\avg$ aggregation.
(I3) 
In the case of $\max_x \avg_y$ aggregation the procedure is made more
efficient (and closer to variable elimination where variable order is
flexible) by noting that, within the set of variables $x$, aggregation can be
done in any order.  Therefore, once $y$
has been aggregated, any variable that does not appear above node $n$ can be
aggregated at $n$. 
(I4) 
The recursive algorithm can be extended to collect edge sets for winning
blocks by associating them with table entries. Leaf nodes have empty
edge sets. The join step at each node adds the corresponding edge (for true
or false child) for each entry. Finally, when aggregating an average
variable we take the union of edges, and when aggregating a max variable we
take the edges corresponding to the winning value, breaking ties in favor of
the lexicographically smaller set of edges.

A detailed example of the algorithm is
given in Figure~\ref{Fig:VE}(b) where the evaluation is on the same interpretation as in part (a). 
We see that node 3 first collects a table
over $x_2,x_3$ and that, because $x_3$ is not used above, it already aggregates
$x_3$. 
The join step for node 2 uses entries $(b,a)$ and $(c,a)$ for
$(x_1,x_2)$ from the left child and other entries from the right child.
Node~2 collects the entries
and 
(using I3)
aggregates $x_1$ even though $x_2$ appears above. 
Node~1 then similarly collects
and combines the tables 
and aggregates $x_2$. 
The next theorem
is proved by induction over the structure of the GFODD
(details available in \cite{JoshiKhTaRaFe-arxiv-2013}). 
\begin{theorem}
The value and max block returned by the modified Eval procedure are identical
to the ones returned by the brute force method.
\label{GFODDThm}
\end{theorem}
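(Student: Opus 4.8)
The plan is to prove the statement by structural induction on the GFODD, with a single invariant that covers both the returned value and the winning edge set at the same time. For a node $n$, write $above(n)$ for the variables still occurring above $n$. The invariant I would maintain is: Eval($n$) returns a table indexed by assignments of $above(n)$ to objects of $I$, and for each such assignment $\sigma$ the entry equals the value that brute force would compute on the sub-diagram rooted at $n$ with $above(n)$ fixed to $\sigma$ and with every variable outside $above(n)$ aggregated using the global operators in the global order; moreover the attached edge set equals the set of edges that brute force would mark for the corresponding winning block. The first thing I would establish, using the sortedness of the diagram, is that the variables not occurring above $n$ are exactly those whose index exceeds $maxabove(n)$; this guarantees that the steps ``aggregate down to $maxabove(n)+1$'' and ``down to $maxvar(n)+1$'' eliminate precisely the intended variables and that no variable is ever aggregated while it still appears on an ancestor path.

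For the base case, a leaf carries a constant value $c$, so every block through it has value $c$; by idempotence of $\max$ and $\avg$ (improvement I2) the aggregation over the implicit non-$above(n)$ variables returns $c$ unchanged, the edge set is empty, and the invariant holds trivially.

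For the inductive step I would assume the invariant for both children, whose variable set above them is $above(n)\cup self(n)$ with largest index $maxvar(n)$, and argue in two moves. First, the join $bl^{n_{\downarrow t}}(I)\times$ Eval($n_{\downarrow t}$) keeps exactly the bindings of $self(n)$ for which the node literal $p(x)$ is true, and symmetrically for the false child; since $bl^{n_{\downarrow t}}(I)$ and $bl^{n_{\downarrow f}}(I)$ partition all bindings of $self(n)$, the union $M^{\downarrow t}(I)\cup M^{\downarrow f}(I)$ reconstructs, for each assignment of the surviving variables, exactly the value obtained by routing every substitution to the branch selected by the literal, which is what brute force does path by path. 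Second, I must justify aggregating the variables of index in the range $(maxabove(n),maxvar(n)]$ already at node $n$ rather than at the root. This rests on two commutation facts: within the $\max$ block the order of the $x$ variables is free (improvement I3), and for the $\avg$ variables the average is additive, so when the $y$-substitutions split between the two branches the partial sums held in $M^{\downarrow t}(I)$ and $M^{\downarrow f}(I)$ combine correctly under the union before the outer $\max$ over $x$ is applied.

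I expect the main obstacle to be precisely this interaction between $\max$ and $\avg$ across the branch split: I have to show that partitioning the $y$-substitutions by the node literal, aggregating each branch separately, and only then taking the union does not disturb the semantics in which $\avg_y$ is innermost relative to $\max_x$. The key identity is that the average over all $y$ equals the normalized sum of the branch-wise sums, which is exactly the additivity property underlying the non-distributivity caveat noted for $\max$ versus $\avg$ earlier in the paper. For the edge-set part I would run the same induction, checking that the join appends the correct true or false edge, that $\avg$-aggregation unions the edge sets, and that $\max$-aggregation retains the winning entry with lexicographic tie-breaking (improvement I4); since the values are shown identical stage by stage, the same block wins throughout, so the marked edges coincide with those of the brute force method. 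Finally, the DAG structure is handled by the memoization of improvement I1, which returns identical tables and hence preserves the invariant.
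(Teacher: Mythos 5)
Your treatment of the \emph{value} part is sound and follows essentially the same route as the paper: a structural induction in which each table entry returned by Eval($n$) is shown to equal the correct partial aggregation of the corresponding block of the brute-force computation, with the two cases (block routed entirely to one child vs.\ split across both children by the node literal) handled by the join/union steps. However, the \emph{edge-set} part has a genuine gap, and it sits exactly where the theorem has real content. You write that ``since the values are shown identical stage by stage, the same block wins throughout, so the marked edges coincide.'' This is only true when the maximizing block is unique. The entire reason both the brute-force method and improvement I4 specify lexicographic tie-breaking is that several blocks can achieve the winning value (this already happens in the paper's own example, where the value $7/3$ is realized by two distinct edge sets, $1t2f3t3f$ and $1f3t3f$). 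When ties occur, the brute-force method breaks them \emph{globally}, comparing complete edge sets at the root, while Eval breaks them \emph{locally}, at each max-variable aggregation, over partial edge sets that do not yet contain the edges of ancestor nodes. Nothing in your induction shows these two policies select the same set. To close the gap you need to strengthen the invariant to: every table entry carries the lexicographically \emph{smallest} edge set among all blocks achieving that entry's value, and then verify in the inductive step that (i) the join, which adds the common edge $n_{\downarrow c}$ to all entries of a child table, preserves lexicographic order between edge sets (this holds because that edge comes from node $n$ and hence precedes every descendant edge in the ordering), and (ii) at a max aggregation the lexicographic minimum over the union of candidate blocks equals the minimum of the per-entry minima. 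This is precisely the work done in the paper's proof, which establishes the strengthened invariant by contradiction at the lowest violating node.

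A secondary, more minor error: your opening claim that, by sortedness, ``the variables not occurring above $n$ are exactly those whose index exceeds $maxabove(n)$'' is false; only one direction holds. In the paper's running example, $x_1$ does not occur above node 2 even though its index is smaller than $maxabove(2)=2$. The true direction (variables with index exceeding $maxabove(n)$ cannot occur above $n$) is what guarantees that early aggregation is safe, but your ``exactly'' would make the table indexed by $above(n)$ alone, whereas the procedure's tables can retain explicit or implicit variables of smaller index that occur neither above nor below $n$; the paper handles these by expanding entries with the values from the ambient substitution, and improvement I3 (not sortedness) is what licenses aggregating such max variables early.
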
 

\newcommand{\mcproof}{
\begin{proof}
We start by proving the correctness of the evaluation step on its own
without the specialization for $\max_x \avg_y$ aggregation and the
additional steps for reductions.

The pseudocode for the Eval procedure was given above. 
Note that the two children of node $n$ may have aggregated different sets of
variables (due to having additional parents). Therefore in the code we
aggregate the table from each side separately (down to $maxvar(n)+1$) before
taking the union. Once the two sides are combined we still need to aggregate
the variables between $maxvar(n)+1$ and $maxabove(n)+1$ before returning the
table. 

We have the following:

\begin{proposition}
The value returned by the Eval procedure is exactly $\map_B(I)$.
\end{proposition}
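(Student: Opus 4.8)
The plan is to prove the proposition by structural induction over the DAG of $B$, processing nodes in reverse topological order from the leaves up to the root, and to carry along a precise invariant describing exactly what table Eval($n$) returns. For a node $n$, using the notation already defined, let the \emph{kept} variables be those of index at most $maxabove(n)$ that occur above $n$, and let $R(n)$ be the remaining variables (those aggregated at or below $n$). The invariant I would establish is: Eval($n$) returns a table indexed by assignments $\rho$ of objects of $I$ to the kept variables, where the entry for $\rho$ equals the value obtained by taking the sub-GFODD rooted at $n$, fixing the kept variables according to $\rho$, and aggregating all variables of $R(n)$ in the global nested order (innermost $x_n$, proceeding upward, skipping variables not in $R(n)$). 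Applied at the root, where $above(n)=\emptyset$ and nothing is kept, this invariant yields a single table entry equal to the full nested aggregation, i.e.\ $\map_B(I)$, which is the claim.

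For the base case, a leaf labeled with constant $c$ has subdiagram value $c$ for every substitution, so aggregating $R(n)$ reduces to aggregating copies of $c$; the code builds the one-entry implicit table and aggregates exactly the variables of index greater than $maxabove(n)$, matching the invariant. For the inductive step at an internal node $n$ with literal $p(x)$, I would invoke the inductive hypothesis on the two children. The join $bl^{n_{\downarrow t}}(I) \times$ Eval($n_{\downarrow t}$) restricts the child's table to those object bindings of the $self(n)$ variables that make $p$ true, hence take the true branch, and symmetrically for the false branch. Since $bl^{n_{\downarrow t}}(I)$ and $bl^{n_{\downarrow f}}(I)$ are disjoint and jointly cover all bindings of the $self(n)$ variables, the union $M^{\downarrow t}(I) \cup M^{\downarrow f}(I)$ correctly reassembles, for each binding of the kept-plus-self variables, the value of the subdiagram at $n$ — this is simply the statement that a path through $n$ branches left or right according to the truth of $p$.

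The crux, and the step I expect to be the main obstacle, is justifying the two distinct aggregation thresholds, i.e.\ that pushing aggregation inward to $n$ is sound and reproduces the nested brute-force order. The key observation is that whether a substitution's path reaches $n$ is determined solely by the atoms strictly above $n$, and those mention only variables in $above(n)$, all of index at most $maxabove(n)$. Hence for any variable of index greater than $maxabove(n)$ the event ``this path passes through $n$'' is independent of that variable's value; once the variables of index at most $maxabove(n)$ are fixed, aggregation over it factors through $n$ and can be performed locally without disturbing sibling subdiagrams. This justifies running the post-union aggregation down to $maxabove(n)+1$. The sharper per-branch threshold $maxvar(n)+1$ reflects that a variable of index exceeding $maxvar(n)$ additionally does not occur in $p(x)$, so it cannot influence the left/right split and may be aggregated inside each branch before the union; in contrast, a variable in the range $(maxabove(n), maxvar(n)]$ may be a $self(n)$ variable that governs the split, and such a variable must be aggregated only after both branches are recombined, which is exactly what the union-then-aggregate ordering achieves. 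I would verify that deferring all variables in this range to the post-union step is harmless under the nested order.

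Two technical points will require care in filling out this argument. First, because $B$ is a DAG rather than a tree, a shared child may have aggregated a different set of variables than a given parent expects (owing to additional parents forcing a larger $above$ set); the code handles this by re-aggregating each branch down to the common level $maxvar(n)+1$ before the union, and I would confirm that this re-aggregation is well defined and consistent with the nested order regardless of which variables the child had already eliminated. Second, the proposition is stated for general aggregation operators, so I cannot lean on idempotency (the $\max_x\avg_y$ specialization of improvement I2 is set aside here); consequently the multiplicities of implicit variables must be tracked honestly, which is precisely the role of the explicit ``all variables implicit'' bookkeeping at the leaves and of the join semantics. Once these commutation and bookkeeping facts are in place, a routine unwinding of the recursion shows that Eval($n$) computes the claimed partial aggregation, completing the induction and hence the proposition.
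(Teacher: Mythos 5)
Your proposal is correct and takes essentially the same route as the paper's own proof: a structural induction over the diagram whose invariant is that the table entry at node $n$ consistent with a fixed substitution of the variables of index at most $maxabove(n)$ equals the corresponding block-aggregated value of the brute-force procedure, with your threshold-soundness argument (variables above $maxvar(n)$ cannot affect the branch split, variables in $(maxabove(n), maxvar(n)]$ must wait for the union) playing the role of the paper's case split on $maxself(n)\leq maxabove(n)$ versus $maxself(n)>maxabove(n)$. The one small imprecision is that the table at $n$ is indexed not only by variables occurring above $n$ but also, implicitly or explicitly, by any unaggregated variable of index at most $maxabove(n)$ that occurs only below $n$; the paper dispatches this exactly as your bookkeeping remark does, by expanding implicit entries against the fixed substitution.
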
 
Given a node $n$, the value of $maxabove(n)$, and a concrete substitution
$\zeta$ (for variables $z_1$ to $z_{maxabove(n)}$)
reaching $n$ in $I$ we consider the
corresponding block in the brute force evaluation procedure and in our
procedure. 
For the brute force evaluation we fix the values of $z_1$ to
$z_{maxabove(n)}$ 
to agree with $\zeta$
and
consider the aggregated value when all variables down to $z_{maxabove(n)}+1$
have been aggregated. 
For Eval($n$) we consider the entry in the table returned by the procedure 
which is consistent with $\zeta$. 
Since the table may
include some variables (that are smaller than $maxabove(n)$ but do not appear
below $n$) implicitly we simply expand the table entry with the values from
$\zeta$. 

We next prove by induction over the structure of the diagram that the
corresponding entries are identical.  First, note that if this holds at the
root where $above(n)$ is the empty set, then the proposition holds because
all variables are aggregated and the value is $\map_B(I)$.

For the base case, it is easy to see that the claim holds at a leaf, because
all substitutions reaching the leaf have the same value, and the block is
explicitly aggregated at the leaf. 

Consider any node $n$. We have two cases. In the first case, $maxself(n)\leq
maxabove(n)$. In this case, for any $\zeta$, 
the entire block 
traverses $n_{\downarrow c}$ (where $c$ is either $t$ or $f$ as appropriate).
Clearly, the join with $bl^{n_{\downarrow c}}(I)$ identifies the correct child
  $c$ with 
respect to the entry of $\zeta$. Consider the table entries in 
$M^{\downarrow c}(I)$ 
that are extensions of the substitution $\zeta$ 
possibly specifying more variables.
More precisely, if the the child node is $n'$ the entries include the
variables up to  
$\ell=maxabove(n')$.
By the inductive hypothesis the value in each entry is a correct aggregation
of all the variables down to $\ell+1$. 
Now since the remaining variables are explicitly aggregated at $n$, the value
calculated at $n$ is correct.

In the second case, $maxself(n)>
maxabove(n)$ which means that some extensions of $\zeta$
traverse $n_{\downarrow t}$ and some traverse $n_{\downarrow f}$.
However, as in the previous case, by the inductive hypothesis we know that
the extended entries at the children are correct aggregations of their
values. Now it is clear that the union operation correctly collects these
entries together into one block, and as before because the remaining
variables are explicitly aggregated at $n$, the result is correct.
\qed

Now we give a more detailed version of the algorithmic
extension of the algorithm to collect edge sets. 
In addition to the the substitution and value, 
every table entry is associated with a set of edges.
\\
(1)
When calculating the join we add the edge $n_{\downarrow f}$ 
to the corresponding table returned by the call to 
Eval($n_{\downarrow f}$)
and similarly for $n_{\downarrow t}$ and 
Eval($n_{\downarrow t}$).  
\\
(2)
When a node aggregates an average variable the set of edges for the
new entry is the union of edges in all the entries aggregated.
\\
(3)
When a node aggregates a max variable the set of edges for the
new entry is the set of edges from the winning value. In case of a tie
we pick the set of edges which is smallest lexicographically. 
\\
(4) A leaf node returns the empty set as its edge set.

The proof of Proposition~\ref{GFODDThm} 
is similar to the proof above in that we define a property of nodes
and prove it inductively, except that we need to argue by way of
contradiction. 

For a node $n$ and a concrete substitution $\zeta$ (for
variables $z_1$ to $z_{maxabove(n)}$) reaching $n$ in $I$, define $B_\zeta$
to be the sub-diagram of $B$ rooted at $n$ where $z_1$ to $z_{maxabove(n)}$
are substituted by $\zeta$, and with the aggregation function of
$z_{maxabove(n)},\ldots,z_{N}$ as in $B$ where $z_N$ is the last variable
in the aggregation function.

We claim that for each node $n$, and $\zeta$ that reaches $n$,
the entry in the table
returned by $n$ consistent with $\zeta$ has the value $v=\map_{B_\zeta}(I)$ and
set of edges $E$, where $E$ is the lexicographically smallest set of edges of
a block achieving the value $v$.
Note that if the claim holds at the root $n$ then the proposition holds because
$above(n)$ is empty. In addition, the values returned by the procedure are
exactly as before and they are therefore correct. In the rest of the proof we
argue that the set of edges returned is lexicographically smallest.

Now consider any $I$ and any $B$ and assume by way of contradiction that the
claim does not hold for $I$ and $B$. Let $n$ be the lowest node in $B$ for
which this happens. That is the claim does hold for all descendants of $n$.

It is easy to see that such a node $n$ cannot be a leaf, because for any leaf
the set
$E$ is the empty set and this is what the procedure returns.

For an internal node $n$, again we have two cases.
If
$maxself(n)\leq
maxabove(n)$ $\zeta$, then
the entire block corresponding to $\zeta$
traverses $n_{\downarrow c}$ (where as above $c$ is $t$ or $f$).
In this case, if the last variable (the only one with average aggregation)
has not yet been aggregated then the tables are full and the claim clearly
holds because aggregation is done directly at node $n$. Otherwise, $n$'s
child aggregated the variables beyond $z_k$ for some $k\geq m=maxabove(n)$. 
Let $\eta$ be a substitution for $z_{m+1},\ldots,z_k$. Then by the assumption
we know that each entry in the table returned by the child, 
which is consistent with
$\zeta,\eta$ has value $\map_{B_{\zeta,\eta}}(I)$ and the lexicographically
smallest set of edges corresponding to a block achieving this value. 

Now, at node $n$ we aggregate $z_{m+1},\ldots,z_k$ using this table.
Consider the relevant sub-table with entries $\zeta,\eta_i,v_i,\hat{E}_i$
where $\hat{E}_i$ is $E_i$ with the edge $n_{\downarrow c}$ added to it by
  the join operation.
Because $z_{m+1},\ldots,z_k$ use $\max$ aggregation, the aggregation at $n$
picks a $v_i$ with the largest value and the corresponding $\hat{E}_i$ where in
case of tie in $v_i$ we pick the entry with smallest $\hat{E}_i$.

By our assumption this set $\hat{E}_i$ is not the lexicographically smallest set
corresponding to a block of substitutions realizing the value
$\map_{B_{\zeta}}(I)$. 
Therefore, there must be a block of valuations $\zeta \eta' \gamma$ where
$\eta'$ is the substitution for $z_{m+1},\ldots,z_k$ and $\gamma$ captures
the remaining variables realizing the same value and whose edge set $E'$ is
lexicographically smaller than $\hat{E}_i$. But in this case $\eta'=\eta_j$ for
some $j$, and $E'\setminus n_{\downarrow c}$ is 
lexicographically smaller than $E_i$
which (by construction, because the algorithm chose $E_i$) is  
lexicographically smaller than $E_j$. 
Thus the entry for $E_j$ is incorrect.
This contradicts our assumption that $n$ is the lowest node violating
the claim.

The second case, where $maxself(n)>
maxabove(n)$ $\zeta$ is argued similarly. In this case the substitutions
extending $\zeta$ may traverse either $n_{\downarrow t}$ or $n_{\downarrow
  f}$.
We first aggregate some of the variables in each child's table. 
We then take the union of the tables to form the block of $\zeta$ (as well as
other blocks) and aggregate the remaining $z_{m+1},\ldots,z_k$.
As in the previous case, both of these direct aggregation steps
preserve the minimality of the corresponding sets $E_i$
\qed

\end{proof}
}

\begin{figure*}[t]
\begin{minipage}{0.32\linewidth}
\begin{center}
\includegraphics[height=1.3in,width=2.4in]{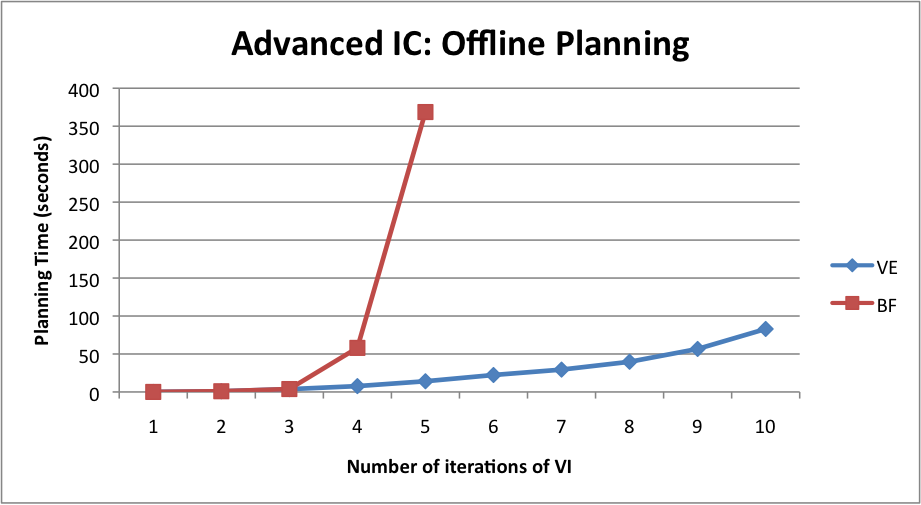}
\end{center}
\end{minipage}
\hspace{22mm}
\begin{minipage}{0.32\linewidth}
\begin{center}
\includegraphics[height=1.3in,width=2.4in]{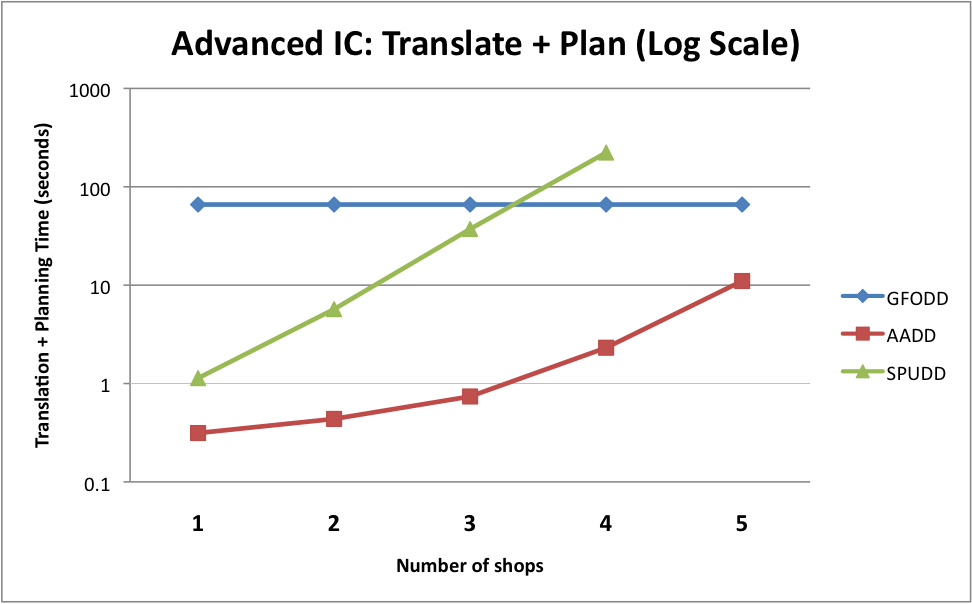}
\end{center}
\end{minipage}

\vspace{2mm}

\begin{minipage}{0.32\linewidth}
\begin{center}
\includegraphics[height=1.3in,width=2.4in]{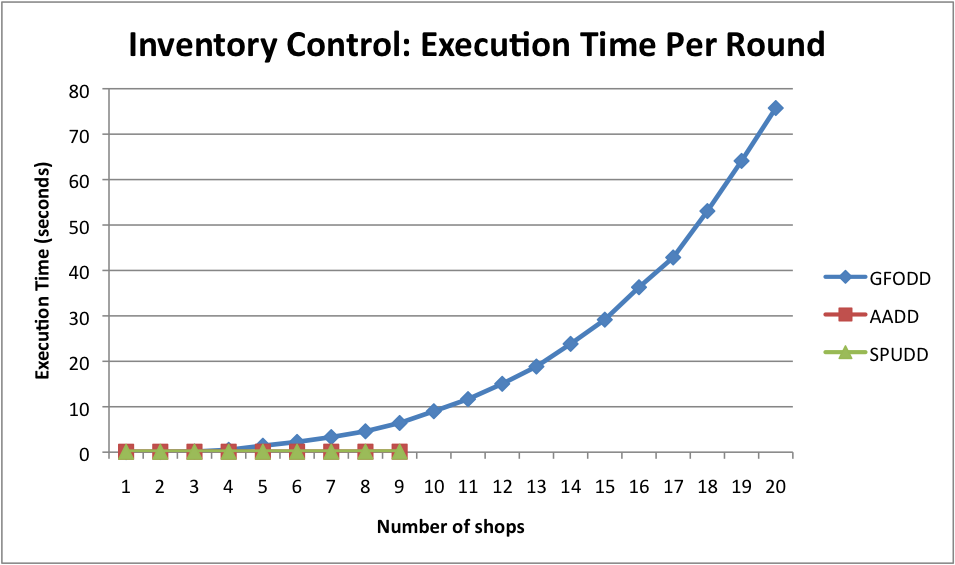}
\end{center}
\end{minipage}
\hspace{22mm}
\begin{minipage}{0.32\linewidth}
\begin{center}
\includegraphics[height=1.3in,width=2.4in]{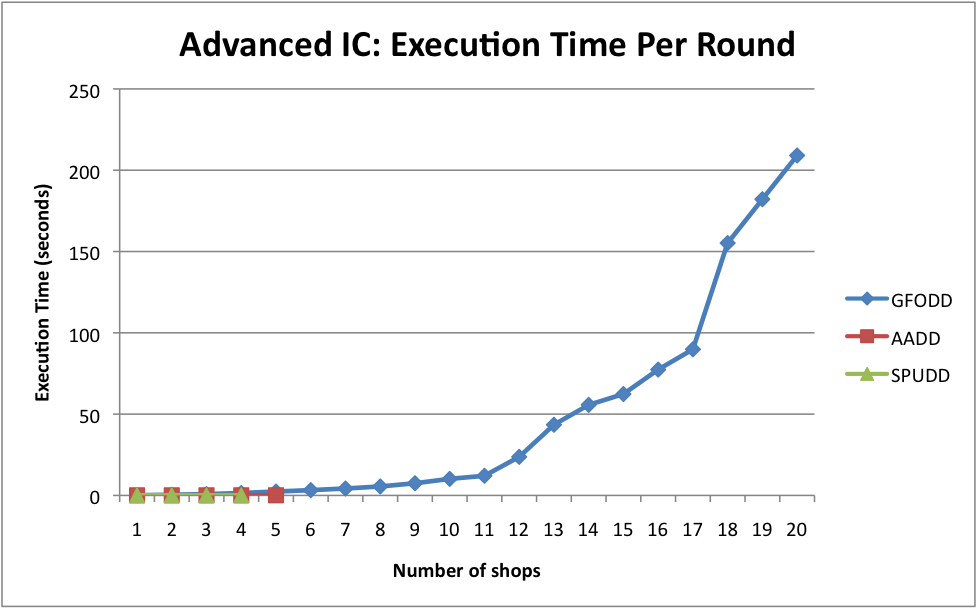}
\end{center}
\end{minipage}

\vspace{2mm}

\begin{minipage}{0.32\linewidth}
\begin{center}
\includegraphics[height=1.3in,width=2.4in]{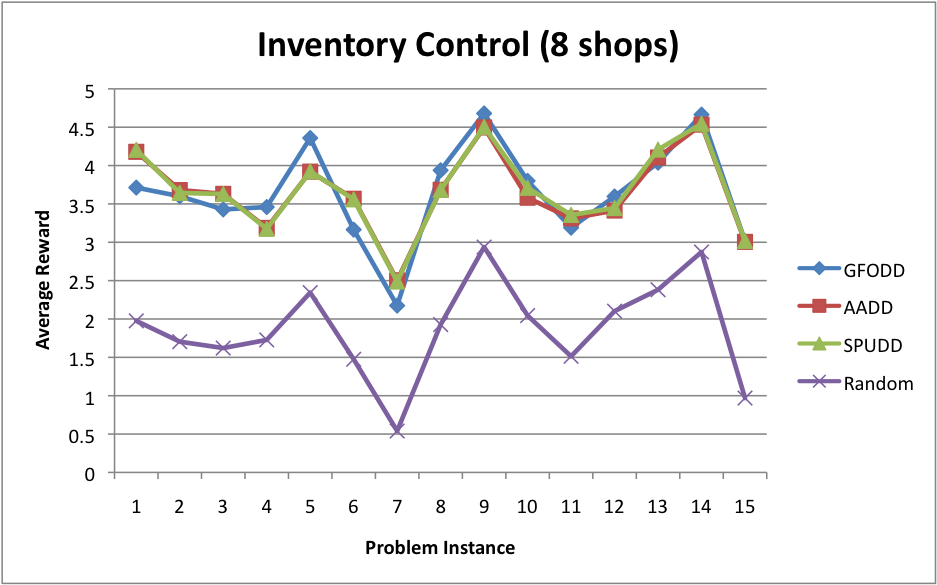}
\end{center}
\end{minipage}
\hspace{22mm}
\begin{minipage}{0.32\linewidth}
\begin{center}
\includegraphics[height=1.3in,width=2.4in]{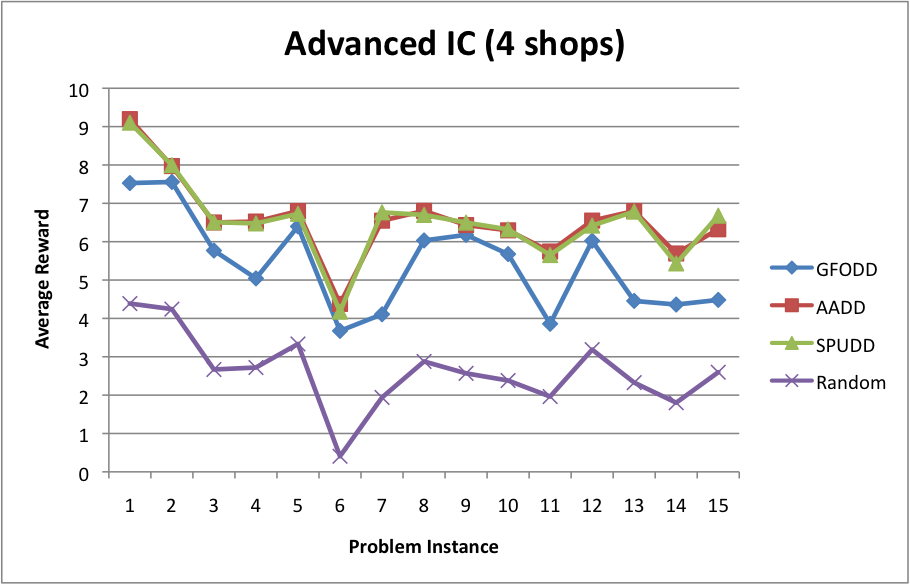}
\end{center}
\end{minipage}
\caption{Experimental Results}
\label{Fig:exp}
\end{figure*}

\section{Experimental Validation}

In this section we present an empirical demonstration of our
algorithms. To that end
we implemented our algorithms in Prolog as an
extension of the {\sc FODD-Planner} \cite{JoshiKh08}, and compared it
to SPUDD \cite{HoeyStHuBo99} and MADCAP \cite{SannerUtDe10}
that take advantage of propositionally factored state spaces,
and implement VI
using propositional algebraic decision diagrams (ADD)
and affine ADDs respectively.
For SPUDD and MADCAP, the domains
were specified in
the Relational Domain Description Language (RDDL)
and translated into propositional descriptions
using software provided for the IPPC 2011 planning competition
\cite{Sanner10}.
All experiments were run
on an Intel Core 2 Quad CPU @ 2.83GHz. Our
system was given
$3.5$Gb of memory and SPUDD and MADCAP were given $4$Gb.

We tested all three systems on the IC domain as described above where
shops and trucks have binary inventory levels (empty or full).
We present results for the IC domain,
because it satisfies all our assumptions and because
the propositional systems fare better in this case.
We also present results for a more complex
IC domain (advanced IC or AIC below) where the inventory can be in
one of $3$ levels 0,1 and 2
and a shop can have one of $2$ consumption
rates $0.3$ and $0.4$.
AIC does not satisfy assumption {\bf A3}.
As the experiments show, even with this small extension, the combinatorics
render the propositional approach infeasible.
In both cases, we constructed the set of focus states to include all
possible states over 2 shops.
This provides exact reduction for states with 2 shops but the reduction is approximate for larger states as in our experiments.

Figure~\ref{Fig:exp} summarizes our results, which we discuss from left to right and top to bottom.
The top left plot shows runtime as a function of iterations for AIC and illustrates
that the variable elimination method is significantly faster than
brute force evaluation and that it enables us to run many more iterations.
The top right plot shows the total time (translation from
RDDL to a propositional description and off-line
planning for 10 iterations of VI) for
the 3 systems for one problem instance per size for AIC.
SPUDD runs out of memory and fails on more than 4 shops and MADCAP
can handle at most 5 shops.
Our planning time (being domain size agnostic) is constant.
Runtime plots for IC are omitted but they show a similar qualitative picture,
where the propositional systems fail with more than
$8$ shops for SPUDD and $9$ shops for MADCAP.

The middle two plots show the cost of using the policies, that is, the
on-line execution time as a function of increasing
domain size in test instances.
To control run time for our policies we show the time
for the GFODD policy produced after 4 iterations, which is
sufficient to solve any problem in IC and AIC.\footnote{
Our system does not achieve structural convergence because the reductions
are not comprehensive. We give results at 4 iterations as this is
sufficient for solving all problems in this domain.
With more iterations, our policies are larger and their execution is
slower.
}
On-line time for
propositional systems is fast for the domain sizes they solve,
but our system can solve problems of much larger size (recall that the
state space grows exponentially with the number of shops).
The bottom two plots show the total discounted reward accumulated by
each system (as well as a random policy) on $15$ randomly generated problem instances
averaged over 30 runs.
In both cases all algorithms are significantly better than the random
policy. In IC our approximate policy is not distinguishable from the
optimal (SPUDD). In AIC the propositional
policies are slightly better
(differences are statistically significant).
In summary, our system provides a non-trivial approximate policy
but is
sub-optimal in some cases, especially in AIC where {\bf A3} is
violated. On the other hand its offline planning time is independent
of domain size, and it can solve instances that cannot be solved by
the propositional systems.

\section{Conclusions}
The paper presents service domains as an abstraction of planning problems with additive rewards and with multiple simultaneous but independent exogenous events. We provide a new relational SDP algorithm and the first complete analysis of such an algorithm with provable guarantees. In particular our algorithm, the template method, is guaranteed to provide a monotonic lower bound on the true value function
under some technical conditions. We have also shown that this lower bound lies between the value of straight line plans and the true value function. As a second contribution we introduce new evaluation and reduction algorithms for the GFODD representation, that in turn facilitate efficient implementation of the SDP algorithm. 
Preliminary experiments demonstrate the viability of our approach and that our algorithm can be applied even in situations that violate some of the assumptions used in the analysis. 
The paper provides a first step toward analysis and solutions of general problems with exogenous events by focusing on a well defined subset of such models. 
Identifying more general conditions for existence of compact solutions, representations for such solutions, and associated algorithms is an important challenge for future work. In addition, 
the problems involved in evaluation and application of diagrams 
are computationally demanding. Techniques to speed up these computations are 
an important challenge for future work.

\subsubsection*{Acknowledgements}
This work was partly supported by NSF under grants IIS-0964457 and IIS-0964705 
and the CI fellows award for Saket Joshi. Most of this work was done when Saket
Joshi was at Oregon State University.

\bibliographystyle{splncs03}
\bibliography{myadd}

\newpage

\section*{Appendix}

The appendix provides additional details 
and proofs that were omitted from the main body of the paper due to
space constraints.

\section{Proof of Lemma~\ref{prop:lowerbound} ($SDP^2$ Provides a Lower Bound)}

In the following we consider performing sequential regression similar
to the second simple approach, 
but where in each step the action variants are not standardized
apart. 
We show that the result of our algorithm, 
which uses a different computational procedure, is equivalent to this procedure.
We then argue that this approach provides a lower bound.

Recall that the input value function $V_i$ has the form $V= \max_x \avg_y V(x,y)$ 
which we can represent in explicit expanded form as
$\max_x \frac{1}{n}[V(x,1) + V(x,2) + \ldots + V(x,n)]$.
Figure~\ref{Fig:IC_ExoRegrExp}(a) shows this expanded form of $V$ for our running example. 
To establish this relationship we show that after the sequential algorithm regresses $E(1),$ $\ldots,$ $E(k)$ the intermediate value function has the form 
\begin{equation}
\label{eq:template-form}
\max_x \frac{1}{n}[W(x,1) + W(x,2) + \ldots + W(x,k) + V(x,k+1) +\ldots + V(x,n)]
\end{equation} 
as shown in Figure~\ref{Fig:IC_ExoRegrExp}(b). 
That is, the first $k$ portions $V(x,\ell)$ change in the same structural manner into a diagram $W(x,\ell)$ and the remaining portions retain their original form. In addition, $W(x,\ell)$ is the result of regressing $V(x,\ell)$ through $E(\ell)$ which is the same form as calculated by step 3 of the template method.
Therefore, when all $E(\ell)$ have been regressed, the result is $V= \max_x \avg_y W(x,y)$ which is the same as the result of the template method. 

We prove the form in Eq~(\ref{eq:template-form}) by induction over $k$. The base case $k=0$ when no actions have been regressed clearly holds.

We next consider regression of $E(k)$. 
We use the restriction that regression (via TVDs) does not introduce new
variables  to conclude that we can regress $V$ by regressing each
element in the sum separately. Similarly, we use the restriction that
probability choice functions do not introduce new variables to
conclude that we can push the multiplication $prob(E_j(k)) \otimes
Regr(V, E_j(k))$ into each element of the sum (cf.\
\cite{SannerBo09,JoshiKeKh11} for similar claims). 

Therefore, each action variant $E_j(k)$ produces a function of the form
$V^j= \max_x$ $\frac{1}{n}[U^j_1(x,1) + U^j_2(x,2) + \ldots + U_k^j(x,k) + U_{k+1}^j(x,k+1) +\ldots +
  U^j_n(x,n)]$ 
where the superscript $j$ indicates regression by the $j$th variant and the form and subscript in $U_\ell$ indicate that different portions may have changed differently.
To be correct, we must standardize apart these functions
and add them using the binary operation $\oplus$.

We argue below that (C1) 
if we do not standardize apart in this step then we get a lower bound on the
true value function, 
and (C2) when we do not standardize apart the result has a special form
where only the $k$'th term is changed and all the terms $\ell\not=k$
retain the same value they had before regression. 
In addition the $k$'th term changes in a generic way from $V(x,k)$ to $W(x,k)$.
In other words, 
if we do not standardize apart the action variants of $E(k)$ then the
result of regression has the form
 in Eq~(\ref{eq:template-form}).

It remains to show that C1 and C2 hold.
C1 is true because for any functions $f^1$ and $f^2$ we have
$[\max_{x_1} \avg_{y_1} f^1(x_1,y_1)] + [\max_{x_2} \avg_{y_2} f^2(x_2,y_2)] 
\geq
\max_{x} [\avg_{y_1}$ $f^1(x,y_1) + \avg_{y_2} f^2(x,y_2)]
=
\max_{x} \avg_{y} [ (f^1(x,y) + f^2(x,y))]
$ where the last equality holds because $y_1$ and $y_2$ range over the same
set of objects. 

For C2 we consider the regression operation and the restriction on the
dynamics of exogenous actions.  Recall that we allow only unary
predicates to be changed by the exogenous actions. To simplify the
argument assume that there is only one such predicate $sp()$. 
According to the conditions of
the proposition $V_i=\max_x \avg_y V(x,y)$ can refer to $sp()$ only as
$sp(y)$. That is, the only argument allowed to be used with $sp()$ is
the unique variable for which we have average aggregation.

Now consider the regression of $E(k)$ over the explicit sum 
$V=\max_x \frac{1}{n}[W(x,1) + W(x,2) + \ldots + W(x,k-1) + V(x,k) +\ldots + V(x,n)]$ which is the form guaranteed by the inductive assumption.
Because $E(k)$ can only change $sp(k)$, and because $sp(k)$ can appear only in 
$V(x,k)$, none of the other terms  
is changed by the regression. 
This holds for all action variants $E_j(k)$.

The sequential algorithm next multiplies each element of the sum by the
probability of the action variant, and then adds the sums without
standardizing apart. Now, when $\ell\not=k$, the $\ell$'th term 
is not changed by regression of $E_j(k)$. 
Then for each $j$ it is multiplied by $Pr(E_j(k))$ and
finally all the $j$ terms are summed together. 
This yields exactly the original term ($W(x,\ell)$ for $\ell<k$ and $V(x,\ell)$ for $\ell>k$). 
The term $\ell=k$ does change and this is exactly as in the template method, that is 
$V(x,k)$ changes to $W(x,k)$.
Therefore C2 holds.

\begin{figure}[t]
\begin{center}
\includegraphics[width=0.95\textwidth]{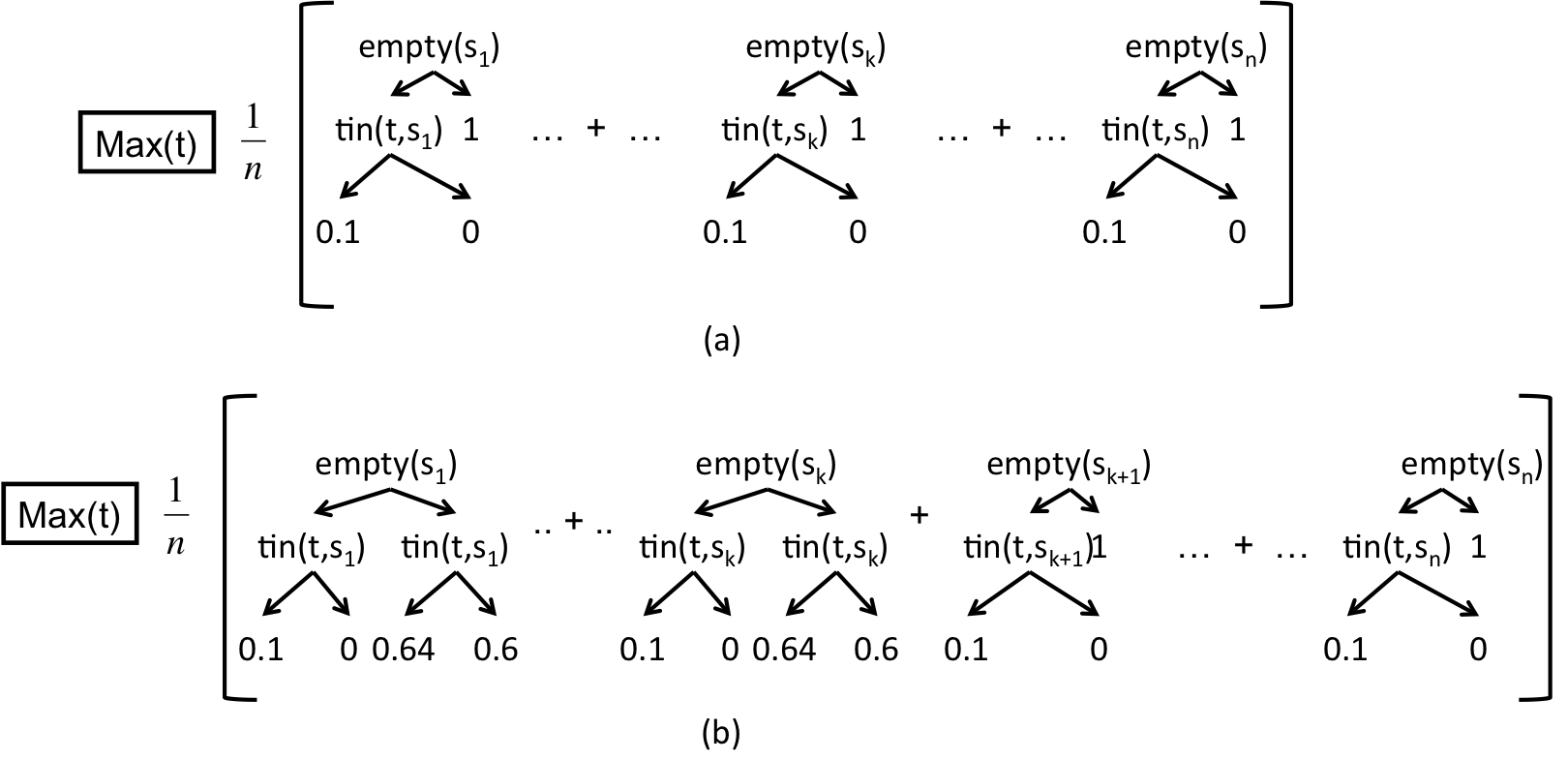}
\caption{Regression via the Template method (a) Expanded form of
  Figure~\ref{Fig:IC_DynRegr}(e). (b) Expanded form of 
the value function after regressing $E(1),E(2),\ldots, E(k)$.
}
\label{Fig:IC_ExoRegrExp}
\end{center}
\end{figure}

\section{Proof of Theorem~\ref{MLBThm} (Monotonic Lower Bound)}

The proof of Lemma~\ref{prop:lowerbound} and the text that follows it imply that for all $V$ satisfying {\bf A1-A4} we have $T'[V]\leq T[V]$. 
Now, when $R$ is non-negative, $V_0=R$ and $V_{i+1}=T'[V_{i}]$
this implies that for all $i$, we have $T'[V_i]\leq T[V_i]\leq V^*$.
We next show that under the same conditions on $V_0$ and $R$
we have that for all $i$
\begin{equation}
\label{eq:monlbstep}
 V_i  \leq T'[V_i]  =  V_{i+1}.
\end{equation}
Combining the two we get 
$V_i \leq V_{i+1} = T'[V_i] \leq T[V_i] \leq V^*$
as needed.

We prove Eq (\ref{eq:monlbstep})
by induction on $i$. For the base case it is obvious that $V_0\leq V_1$ because $V_0=R$ and $V_1=R+W$ where $W$ is the regressed and discounted value function which is guaranteed to be non-negative.

For the inductive step, note that all the individual operations we use with GFODDs (regress, $\oplus$, $\otimes$, $\max$) are monotonic. That is, consider any functions (GFODDs) such that $f_1\geq f_2$ and $f_3\geq f_4$ then $regress(f_1)\geq regress(f_2)$ and $op(f_1,f_3)\geq op(f_2,f_4)$. As a result, the same is true for any sequence of such operations and in particular for the sequence of operations that defines $T'[V]$. Therefore, $V_{i-1} \leq  V_{i}$ implies
  $V_i = T'[V_{i-1}] \leq T'[V_i] = V_{i+1} $.

\section{Proof of Observation~\ref{obs:slplans}  (Relation to Straight Line Plans)}

The template method provides
symbolic way to calculate a lower bound on the value function.
It is interesting to consider what kind of lower bound this provides. 
Consider regression over $E(k)$ and the source of approximation in the
sequential argument where we do not standardize apart.  Treating $V_n$ as the
next step value function, captures the ability to take the best action in the
next state which is reached after the current exogenous action. 
Now by calculating $\max_{x} \avg_{y} [ (f^1(x,y) + f^2(x,y))]$
the choice of the next action (determined by $x$) is done without knowledge
of which action variant $E_j(k)$ has occurred. Effectively, we have pushed the
expectation over action variants $E_j(k)$ into the $\max$ over actions for
the next step. Now, because this is done for all $k$, and at every iteration
of the value iteration algorithm, the result is similar to having replaced the
true $m$ step to go value function
\begin{eqnarray*}
\max_{\alpha_1} Exp_{\beta_1} \max_{\alpha_2} Exp_{\beta_2} \ldots \max_{\alpha_m} Exp_{\beta_m} f(R,\{\alpha_i\},\{\beta_i\}) 
\end{eqnarray*}
(where $\alpha_i$ is the user action in the
$i$'th step and $\beta_i$ is the compound exogenous action in the $i$'th step)
with 
$\max_{\alpha_1}$$\max_{\alpha_2}$$\ldots$
$\max_{\alpha_m}Exp_{\beta_1}  $$Exp_{\beta_2}  $$\ldots $$Exp_{\beta_m} $$f(R,\{\alpha_i\},\{\beta_i\})$.
The last expression is the value of the best linear plan, known as the
{\em straight line plan approximation}. The analogy given here does not go through
completely due to two facts. First, the $\max$ and expectation are over
arguments and not actions.
In particular, when there is more than one agent action template (e.g., $load$ $unload$, $drive$), we explicitly maximize over agent actions in Step~\ref{sdp_4} of $SDP^1$. These max steps are therefore done correctly and are not swapped with expectations. 
Second, we do still standardize apart agent actions so that their outcomes are
taken into consideration. In other words the expectations due to randomization in the outcome of agent actions are performed correctly and are not swapped with max steps.
On the other hand, 
when there is only one agent action template and the action is
deterministic we get exactly 
straight line plan approximation.

\section{Preparation for Proof of Theorem~\ref{GFODDThm} (Correctness of Model Evaluation Algorithm)}
We start by proving the correctness of the evaluation step on its own
without the specialization for $\max_x \avg_y$ aggregation and the
additional steps for reductions.

The pseudocode for the Eval procedure was given above. 
Note that the two children of node $n$ may have aggregated different sets of
variables (due to having additional parents). Therefore in the code we
aggregate the table from each side separately (down to $maxvar(n)+1$) before
taking the union. Once the two sides are combined we still need to aggregate
the variables between $maxvar(n)+1$ and $maxabove(n)+1$ before returning the
table.

We have the following:

\begin{proposition}
\label{prop:gfoddeval-correct}
The value returned by the Eval procedure is exactly $\map_B(I)$.
\end{proposition}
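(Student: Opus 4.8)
The plan is to prove Proposition~\ref{prop:gfoddeval-correct} by structural induction over the GFODD, establishing an invariant that is strong enough to survive the recursion: namely that for each node $n$ and each substitution $\zeta$ of the variables $z_1,\ldots,z_{maxabove(n)}$ that reaches $n$ in $I$, the table entry returned by $\mathrm{Eval}(n)$ consistent with $\zeta$ equals the value obtained by the brute force procedure after fixing those same variables to $\zeta$ and aggregating all variables from the last one down to $z_{maxabove(n)+1}$. The reason this localized statement suffices is that at the root $above(n)=\emptyset$, so there are no variables left to fix, every variable has been aggregated, and the single returned value is exactly $\map_B(I)$. The subtlety that forces the invariant to be stated per-node rather than globally is that the two children of $n$ may sit below different sets of variables (because they can have other parents), so each child may have already aggregated a different suffix of the variable order; the code compensates by aggregating each child's table down to $maxvar(n)+1$ separately before the union, and then aggregating the merged table down to $maxabove(n)+1$.

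First I would dispatch the base case. At a leaf, every substitution reaching it yields the same constant value, and the code explicitly aggregates all variables from the last down to $maxabove(n)+1$ over a one-entry (implicit-variable) table; since the aggregated variables all carry identical values this reproduces the brute-force block value, so the invariant holds. Then I would handle an internal node $n$ by splitting on whether $maxself(n)\le maxabove(n)$ or $maxself(n)>maxabove(n)$. In the first case the literal at $n$ uses only variables already fixed by $\zeta$, so the entire block of $\zeta$ travels through a single child $n_{\downarrow c}$; the join with $bl^{n_{\downarrow c}}(I)$ selects exactly that child, and for each entry extending $\zeta$ the inductive hypothesis guarantees correct aggregation down to $maxabove(n')+1$ of the child $n'$, whereupon the explicit aggregation steps at $n$ carry it the rest of the way down to $maxabove(n)+1$. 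In the second case some extensions of $\zeta$ branch true and some branch false; here the key observation is that $bl^{n_{\downarrow t}}(I)$ and $bl^{n_{\downarrow f}}(I)$ are disjoint and cover all bindings of the self-variables, so taking the union of the two (already partially aggregated) child tables reassembles precisely the block of $\zeta$, and the final aggregation at $n$ again finishes the job.

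The main obstacle I anticipate is bookkeeping the variable indices correctly: one must track carefully which variables a given table has already aggregated (this differs between the two children and depends on their $maxabove$ values), which variables get aggregated in each of the six steps of the internal-node code, and the fact that some variables smaller than $maxabove(n)$ but not appearing below $n$ are carried only implicitly and must be conceptually re-expanded to compare with brute force. The aggregation operators are not in general commutative across the max/avg boundary, so the argument must rely on the structural guarantee that the self-variables aggregated at $n$ are contiguous in the order and lie strictly between $maxvar(n)+1$ (or $maxabove(n)+1$) and the indices already handled below, ensuring that the order in which Eval performs aggregations agrees with the canonical order $z_N,\ldots,z_1$ used by brute force. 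Once the invariant is shown to be preserved in both cases, the proposition follows at the root, and the extension to collecting lexicographically-minimal winning edge sets (stated separately) then follows by an analogous inductive argument by contradiction, using that direct $\max$ and $\avg$ aggregation steps each preserve minimality of the recorded edge sets.
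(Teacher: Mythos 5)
Your proposal follows essentially the same route as the paper's own proof: a structural induction whose invariant states that, for each node $n$ and each substitution $\zeta$ of $z_1,\ldots,z_{maxabove(n)}$ reaching $n$, the table entry of Eval($n$) consistent with $\zeta$ matches the brute-force block value aggregated down to $z_{maxabove(n)+1}$, with the same leaf base case and the same two-case split on $maxself(n)\leq maxabove(n)$ versus $maxself(n)>maxabove(n)$ (single-child traversal via the join, versus disjoint-cover union of the two children's tables). The argument is correct and complete in the same sense as the paper's; no meaningful difference in decomposition or key ideas.
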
 
\begin{proof}
Given a node $n$, the value of $maxabove(n)$, and a concrete substitution
$\zeta$ (for variables $z_1$ to $z_{maxabove(n)}$)
reaching $n$ in $I$ we consider the
corresponding block in the brute force evaluation procedure and in our
procedure. 
For the brute force evaluation we fix the values of $z_1$ to
$z_{maxabove(n)}$ 
to agree with $\zeta$
and
consider the aggregated value when all variables down to $z_{maxabove(n)}+1$
have been aggregated. 
For Eval($n$) we consider the entry in the table returned by the procedure 
which is consistent with $\zeta$. 
Since the table may
include some variables (that are smaller than $maxabove(n)$ but do not appear
below $n$) implicitly we simply expand the table entry with the values from
$\zeta$. 

We next prove by induction over the structure of the diagram that the
corresponding entries are identical.  First, note that if this holds at the
root where $above(n)$ is the empty set, then the proposition holds because
all variables are aggregated and the value is $\map_B(I)$.

For the base case, it is easy to see that the claim holds at a leaf, because
all substitutions reaching the leaf have the same value, and the block is
explicitly aggregated at the leaf. 

Given any node $n$, we have two cases. In the first case, $maxself(n)$ 
$\leq$ $maxabove(n)$, that is, all variables in $n.lit$ are already substituted in $\zeta$. In this case, for any $\zeta$, 
the entire block 
traverses $n_{\downarrow c}$ (where $c$ is either $t$ or $f$ as appropriate).
Clearly, the join with $bl^{n_{\downarrow c}}(I)$ identifies the correct child
  $c$ with 
respect to the entry of $\zeta$. Consider the table entries in 
$M^{\downarrow c}(I)$ 
that are extensions of the substitution $\zeta$ 
possibly specifying more variables.
More precisely, if the the child node is $n'$ the entries include the
variables up to  
$\ell=maxabove(n')$.
By the inductive hypothesis the value in each entry is a correct aggregation
of all the variables down to $\ell+1$. 
Now since the remaining variables are explicitly aggregated at $n$, the value
calculated at $n$ is correct.

In the second case, $maxself(n)>
maxabove(n)$ which means that some extensions of $\zeta$
traverse $n_{\downarrow t}$ and some traverse $n_{\downarrow f}$.
However, as in the previous case, by the inductive hypothesis we know that
the extended entries at the children are correct aggregations of their
values. Now it is clear that the union operation correctly collects these
entries together into one block, and as before because the remaining
variables are explicitly aggregated at $n$, the result is correct.
\qed
\end{proof}

\section{Proof of Theorem~\ref{GFODDThm} (Correctness of Edge Marking in Model Evaluation Algorithm)}

We start by giving a more detailed version of the algorithmic
extension of the algorithm to collect edge sets. 
In addition to the the substitution and value, 
every table entry is associated with a set of edges.
\\
(1)
When calculating the join we add the edge $n_{\downarrow f}$ 
to the corresponding table returned by the call to 
Eval($n_{\downarrow f}$)
and similarly for $n_{\downarrow t}$ and 
Eval($n_{\downarrow t}$).  
\\
(2)
When a node aggregates an average variable the set of edges for the
new entry is the union of edges in all the entries aggregated.
\\
(3)
When a node aggregates a max variable the set of edges for the
new entry is the set of edges from the winning value. In case of a tie
we pick the set of edges which is smallest lexicographically. 
\\
(4) A leaf node returns the empty set as its edge set.

The proof of Theorem~\ref{GFODDThm}  
is similar to the proof above, in that we define a property of nodes
and prove it inductively, but in this case it is simpler to argue by way of
contradiction. 

\begin{proof}
The correctness of the value returned was already shown in Proposition~\ref{prop:gfoddeval-correct}. We therefore focus on showing that the set of edges returned is identical to the one returned by the brute force method.

For a node $n$ and a concrete substitution $\zeta$ (for
variables $z_1$ to $z_{maxabove(n)}$) reaching $n$ in $I$, define $B_\zeta$
to be the sub-diagram of $B$ rooted at $n$ where $z_1$ to $z_{maxabove(n)}$
are substituted by $\zeta$, and with the aggregation function of
$z_{maxabove(n)+1},\ldots,z_{N}$ as in $B$ where $z_N$ is the last variable
in the aggregation function.

We claim that for each node $n$, and $\zeta$ that reaches $n$,
the entry in the table
returned by $n$ which is consistent with $\zeta$ has the value $v=\map_{B_\zeta}(I)$ and
set of edges $E$, where $E$ is the lexicographically smallest set of edges of
a block achieving the value $v$.
Note that if the claim holds at the root $n$ then the theorem holds because
$above(n)$ is empty.  In the rest of the proof we
argue that the set of edges returned is lexicographically smallest.

Now consider any $I$ and any $B$ and assume by way of contradiction that the
claim does not hold for $I$ and $B$. Let $n$ be the lowest node in $B$ for
which this happens. That is the claim does hold for all descendants of $n$.

It is easy to see that such a node $n$ cannot be a leaf, because for any leaf
the set
$E$ is the empty set and this is what the procedure returns.

For an internal node $n$, again we have two cases.
If
$maxself(n)\leq
maxabove(n)$, 
then
the entire block corresponding to $\zeta$
traverses $n_{\downarrow c}$ (where as above $c$ is $t$ or $f$).
In this case, if the last variable (the only one with average aggregation)
has not yet been aggregated then the tables are full and the claim clearly
holds because aggregation is done directly at node $n$. Otherwise, $n$'s
child aggregated the variables beyond $z_k$ for some $k\geq m=maxabove(n)$. 
Let $\eta$ be a substitution for $z_{m+1},\ldots,z_k$. Then by the assumption
we know that each entry in the table returned by the child, 
which is consistent with
$\zeta,\eta$ has value $\map_{B_{\zeta,\eta}}(I)$ and the lexicographically
smallest set of edges corresponding to a block achieving this value. 

Now, at node $n$ we aggregate $z_{m+1},\ldots,z_k$ using this table.
Consider the relevant sub-table with entries $\zeta,\eta_i,v_i,\hat{E}_i$
where $\hat{E}_i$ is $E_i$ with the edge $n_{\downarrow c}$ added to it by
  the join operation.
Because $z_{m+1},\ldots,z_k$ use $\max$ aggregation, the aggregation at $n$
picks a $v_i$ with the largest value and the corresponding $\hat{E}_i$ where in
case of tie in $v_i$ we pick the entry with smallest $\hat{E}_i$.

By our assumption this set $\hat{E}_i$ is not the lexicographically smallest set
corresponding to a block of substitutions realizing the value
$\map_{B_{\zeta}}(I)$. 
Therefore, there must be a block of valuations 
$\zeta \eta'$ where
$\eta'$ is the substitution for $z_{m+1},\ldots,z_k$
realizing the same value $v_i$ and whose edge set $E'$ is
lexicographically smaller than $\hat{E}_i$. But in this case $\eta'=\eta_j$ for
some $j$, and $E'\setminus n_{\downarrow c}$ is 
lexicographically smaller than $E_i$
which (by construction, because the algorithm chose $E_i$) is  
lexicographically smaller than $E_j$. 
Thus the entry for $E_j$ is incorrect.
This contradicts our assumption that $n$ is the lowest node violating
the claim.

The second case, where $maxself(n)>
maxabove(n)$ $\zeta$ is argued similarly. In this case the substitutions
extending $\zeta$ may traverse either $n_{\downarrow t}$ or $n_{\downarrow
  f}$.
We first aggregate some of the variables in each child's table. 
We then take the union of the tables to form the block of $\zeta$ (as well as
other blocks) and aggregate the remaining $z_{m+1},\ldots,z_k$.
As in the previous case, both of these direct aggregation steps
preserve the minimality of the corresponding sets $E_i$
\qed
\end{proof}

\end{document}